\documentclass[11pt]{article}

\usepackage[margin=1in]{geometry}
\usepackage{amsmath,amssymb,amsthm,mathtools}
\usepackage{booktabs}
\usepackage{tabularx}
\usepackage{tikz}
\usetikzlibrary{positioning}
\usepackage{algorithm}
\usepackage{algpseudocode}
\usepackage{microtype}
\usepackage[numbers,sort&compress]{natbib}
\usepackage[colorlinks=true,allcolors=blue]{hyperref}
\DeclareMathSizes{10.95}{10}{8}{8}

\newtheorem{theorem}{Theorem}
\newtheorem{lemma}[theorem]{Lemma}

\newtheorem{corollary}[theorem]{Corollary}
\newtheorem{remark}[theorem]{Remark}

\newcommand{\E}{\mathbb{E}}
\newcommand{\R}{\mathbb{R}}
\newcommand{\cB}{\mathcal{B}}
\newcommand{\cK}{\mathcal{K}}
\newcommand{\cS}{\mathcal{S}}
\newcommand{\Reg}{\operatorname{Reg}}
\newcommand{\dtot}{d_{\mathrm{tot}}}
\newcommand{\smax}{\sigma_{\max}}
\newcolumntype{Y}{>{\raggedright\arraybackslash}X}

\title{Beyond Peak Backlog: Conditional Energy and Temporal Geometry in\\
Capacity-Constrained Delayed Bandit Optimization}
\author{
Anling Xiang\textsuperscript{1},
Yuwen Yang\textsuperscript{2},
Yang Shen\textsuperscript{3,4}\\
\textsuperscript{1}Department of Intelligent Communication, School of Journalism and Communication,\\
Minzu University of China, Beijing, China\\
\textsuperscript{2}ZeeLin (Beijing) Technology Co., Ltd., Beijing, China\\
\textsuperscript{3}School of Journalism and Communication, Tsinghua University, Beijing, China\\
\textsuperscript{4}College of AI, Tsinghua University, Beijing, China\\
\small ORCID: Anling Xiang \href{https://orcid.org/0000-0003-1690-1586}{0000-0003-1690-1586}\\
\small Yuwen Yang \href{https://orcid.org/0009-0002-3084-6720}{0009-0002-3084-6720};
Yang Shen \href{https://orcid.org/0000-0003-4814-9018}{0000-0003-4814-9018}
}
\date{}

\begin{document}
\maketitle

\begin{abstract}
What is the right delay complexity when a learner can track only $C$ pending
feedback items and discarded feedback is permanently lost?  The current
one-point bandit convex optimization guarantee in this model pays
$\sqrt{T\smax}$, where $\smax$ is the peak backlog, although unlimited
tracking admits the sharper $\sqrt{\dtot}$ dependence on total delay.  This
gap cannot be closed by a direct moment substitution: randomized admission
creates dependent importance weights, and a learning rate adapted to their
history can invalidate the required cancellation.

We introduce a scheduler-side conditional-energy interface that separates
rate adaptation from the one-point perturbation filtration.  Under the same
semi-clairvoyant oracle and pathwise hard-capacity contract, it yields an
untuned learner satisfying
\[
 \begin{gathered}
 O\!\left(GD\left[\sqrt{E_C\dtot}
 +T^{3/4}\left(1+\frac{\smax}{C}\right)^{1/4}
 \sqrt{\nu k}+1\right]\right),\\
 E_C=1+\left\lceil\log_2\!\left(1+\frac{16(\smax+1)}C\right)\right\rceil.
 \end{gathered}
\]
Here $\nu=M/(Gr)$ is the one-point geometry ratio.  Thus peak-times-horizon
is replaced by total backlog area, up to the explicit restart factor $E_C$;
a public constant-factor peak bound removes $E_C$ while $\dtot$ remains
unknown.  The interface is stated as a reusable delayed weighted
Follow-the-Bandit-Leader theorem, and we show why unrestricted predictable
adaptation is false.

Under strong convexity, the interface yields the temporal cost
$H_A(d)=\sum_t\sigma_t/(A+t)$.  Two delay vectors with identical delay
multisets, $\dtot$, $\smax$, and capacity have polynomially different minimax
regret.  Hence total delay controls the convex upper, but timing matters under
curvature.  Finally, a continuous hard family converts tracking capacity into
a zeroth-order query budget and gives the endpoint
\[
 \Omega\!\left(\min\{T,k\sqrt{T\max\{1,d/C\}}\}\right).
\]
This shows that the hard cap is statistically non-vacuous.  The results
require $C\ge\ln T+1$ for the upper bounds and do not constitute a complete
capacity minimax characterization.
\end{abstract}

\section{Introduction}

Delayed online optimization usually assumes that every pending gradient or
function value can eventually be recovered.  In systems with a finite
feedback buffer---for example, outstanding experiments, remote evaluations,
or asynchronous jobs---this assumption fails: only $C$ pending rounds can be
tracked, and preempted feedback is lost forever.  The hard-capacity model of
\citet{rar-capacity-oco-2026} captures exactly this constraint through
$|\cS_t|\le C$ on every sample path.  Its one-point convex BCO guarantee is
\[
 O\!\left(GD\left[\sqrt{T\smax}
 +T^{3/4}\left(1+\frac{\smax}{C}\right)^{1/4}
 \sqrt{\nu k}\right]\right),
\]
and explicitly leaves open whether the first term can be reduced to the
unconstrained-optimal $\sqrt{\dtot}$ dependence.  Since
$\dtot=\sum_t\sigma_t\le T\smax$, the difference can be polynomial when a
large backlog is short-lived.

At first sight, one might keep the time-varying backlog inside the existing
proof and sum it at the end.  This is not valid.  Randomized admission creates
importance weights coupled through the hard tracking set, while the delayed
one-point estimator contains a square root of the pending weighted energy.
Unconditional moment bounds suffice for deterministic rates but cannot be
multiplied by a rate adapted to the same weight history.  We give an explicit
two-round counterexample to this tempting step.

Our solution is to separate \emph{what the scheduler knows} from what the
bandit estimator randomizes.  We define a scheduler-side filtration, derive
an exact conditional energy envelope inside it, and prove an adaptive
delayed-weighted FTBL theorem.  This interface is the paper's central object:
the total-delay and strongly-convex results are two consequences of the same
filtration-safe principle rather than unrelated refinements of the source
analysis; Figure~\ref{fig:dependency-map} records the boundary precisely.

The resulting message has two parts.  For general convex losses, total
backlog area can replace a peak-times-horizon delay charge even when feedback
is permanently censored.  Under curvature, however, the temporal placement
of that same backlog can change minimax regret polynomially.

For compactness, let
$E_C=1+\lceil\log_2(1+16(\smax+1)/C)\rceil$ denote the capacity-relative
number of backlog scales.

\paragraph{Contributions.}
\begin{enumerate}
  \item \emph{A conditional-energy theorem.}  We identify the scheduler-side
  filtration needed by our construction, prove a causal adaptive FTBL
  interface for conditional energy envelopes, and show by counterexample why
  adaptation to the full predictable history is invalid.
  \item \emph{From peak backlog to total delay.}  An exact Bernoulli-proxy
  envelope and delay-clock restart scheme give a hard-capacity learner that
  knows neither $\dtot$ nor $\smax$ and replaces $\sqrt{T\smax}$ by
  $O(\sqrt{E_C\dtot})$.  With a public constant-factor peak bound, the
  restart factor disappears while $\dtot$ remains unknown.
  \item \emph{Temporal geometry under curvature.}  The same interface yields
  the harmonic cost $H_A(d)=\sum_t\sigma_t/(A+t)$.  Two instances with the
  same delay multiset, $\dtot$, $\smax$, and capacity nevertheless have
  polynomially different minimax regret, proving that timing survives after
  all standard aggregate summaries are fixed.
\end{enumerate}
A complementary endpoint turns pathwise tracking capacity into a continuous
zeroth-order query budget.  It verifies that the hard cap is statistically
non-vacuous, but is not presented as a pointwise match to the upper bound.

\begin{figure}[t]
\centering
\begin{tikzpicture}[
  >=stealth,
  node distance=0.45cm and 0.45cm,
  box/.style={draw,rounded corners=2pt,align=center,inner sep=4pt,
              text width=0.20\textwidth,font=\footnotesize},
  endpoint/.style={draw,dashed,rounded corners=2pt,align=center,inner sep=4pt,
                   text width=0.29\textwidth,font=\footnotesize}
]
\node[box,fill=gray!10] (source) {\textbf{Imported foundation}\\
  hard-capacity contract;\\DW-FTBL inequality;\\Bernoulli admission};
\node[box,fill=red!7,right=of source] (obstacle) {\textbf{Missing bridge}\\
  dependent weights;\\pending square-root energy;\\adaptive rates unsafe};
\node[box,fill=blue!10,right=of obstacle] (interface) {\textbf{New theorem}\\
  scheduler filtration;\\conditional envelope;\\causal energy adaptation};
\node[box,fill=green!10,right=of interface] (consequences) {\textbf{Consequences}\\
  total-delay upper;\\harmonic curved upper;\\same-summary separation};
\draw[->,thick] (source)--(obstacle);
\draw[->,thick] (obstacle)--(interface);
\draw[->,thick] (interface)--(consequences);
\node[endpoint,below=0.5cm of obstacle] (lower) {\textbf{Complementary endpoint}\\
  pathwise capacity $\Rightarrow$ query budget $\Rightarrow$ continuous lower};
\draw[->,dashed] (source.south) |- (lower.west);
\end{tikzpicture}
\caption{Dependency map.  The contract, base DW-FTBL inequalities, and
Bernoulli admission rule are imported from \citet{rar-capacity-oco-2026}.
The missing step is not a sharper summation: weight-dependent adaptation is
invalid without the blue filtration-safe interface.  The dashed capacity
lower is a supporting endpoint rather than an application of the compiler.}
\label{fig:dependency-map}
\end{figure}

\begin{table}[t]
\centering
\caption{Result map.  Constants depending on fixed geometric scales are
suppressed.  ``Match'' refers only to the stated component, not to a complete
BCO minimax characterization.}
\footnotesize
\begin{tabularx}{\textwidth}{@{}>{\raggedright\arraybackslash}p{0.13\textwidth}Y Y >{\raggedright\arraybackslash}p{0.18\textwidth}@{}}
\toprule
Setting & Closest benchmark & This work & Role / limitation \\
\midrule
Convex BCO
& $\sqrt{T\smax}+T^{3/4}(1+\smax/C)^{1/4}\sqrt{\nu k}$
& $O(\sqrt{E_C\dtot}+T^{3/4}(1+\smax/C)^{1/4}\sqrt{\nu k})$
& Upper; $\sqrt{\dtot}$ endpoint matched up to $\sqrt{E_C}$ \\
\addlinespace
Strongly convex BCO
& $\smax\log T+(T^2\log T)^{1/3}(1+\smax/C)^{1/3}(\nu k)^{2/3}$
& $H_A(d)+A^{1/3}T^{2/3}$
& Upper; public backlog bound \\
\addlinespace
Temporal geometry
& Aggregate summaries assign the same scale to the early/late pair
& $R_T^*(d^{\rm early})=\Omega(T^\beta)$ versus
  $R_T^*(d^{\rm late})=O(T^{2/3})$
& Minimax separation; $2/3<\beta<5/6$ \\
\addlinespace
Fixed-delay capacity
& No continuous-domain capacity lower in the exact contract
& $\Omega(\min\{T,k\sqrt{T\max\{1,d/C\}}\})$
& Lower endpoint; not a full match \\
\bottomrule
\end{tabularx}
\end{table}

\begin{figure}[t]
\centering
\begin{tikzpicture}[x=1cm,y=1cm,>=stealth,font=\small]
  % Early panel.
  \node[font=\bfseries] at (2.5,3.65) {Early placement};
  \draw[->] (0,0)--(5,0) node[right] {$t$};
  \draw[->] (0,0)--(0,3.15);
  \node[rotate=90] at (-0.48,1.55) {backlog $\sigma_t$};
  \fill[blue!16] (0.3,0)--(1.55,2.7)--(2.8,0)--cycle;
  \draw[blue!75!black,very thick] (0.3,0)--(1.55,2.7)--(2.8,0);
  \draw[red!75!black,dashed,thick] (0,1.35)--(4.75,1.35);
  \node[anchor=west,red!75!black] at (3.85,1.55) {$C=h/2$};
  \node[above=2pt] at (1.55,2.7) {$\smax=h$};
  \node at (2.5,-0.48) {$H_A(d)=\Theta(h)$};
  \node[font=\footnotesize] at (2.5,-0.92)
        {$R_T^*=\Omega(T^\beta)$, $\beta>2/3$};

  % Late panel.
  \begin{scope}[xshift=8cm]
    \node[font=\bfseries] at (2.5,3.65) {Late placement};
    \draw[->] (0,0)--(5,0) node[right] {$t$};
    \draw[->] (0,0)--(0,3.15);
    \node[rotate=90] at (-0.48,1.55) {backlog $\sigma_t$};
    \fill[blue!16] (2.2,0)--(3.45,2.7)--(4.7,0)--cycle;
    \draw[blue!75!black,very thick] (2.2,0)--(3.45,2.7)--(4.7,0);
    \draw[red!75!black,dashed,thick] (0,1.35)--(4.75,1.35);
    \node[anchor=west,red!75!black] at (0.12,1.55) {$C=h/2$};
    \node[above=2pt] at (3.45,2.7) {$\smax=h$};
    \node at (2.5,-0.48) {$H_A(d)=\Theta(h^2/T)$};
    \node[font=\footnotesize] at (2.5,-0.92)
          {$R_T^*=O(T^{2/3})$};
  \end{scope}

  \node[align=center,font=\footnotesize] at (6.5,-1.62)
       {Same delay multiset, $\dtot=h^2$, $\smax=h$, and $C=h/2$.\\
        Only the temporal placement changes.};
\end{tikzpicture}
\caption{Schematic continuous envelopes of the discrete early/late backlog
profiles used in Theorem~\ref{thm:separation}.  Peak, area, delay multiset,
and capacity agree, while the harmonic placement cost and minimax regret
differ polynomially.}
\label{fig:backlog-geometry}
\end{figure}

\section{Model and notation}

Let $\cK\subset\R^k$ be convex with diameter at most $D$ and
$r\mathbb B^k\subseteq\cK\subseteq R\mathbb B^k$.  Before play, an
oblivious adversary fixes differentiable convex losses
$f_t:\cK\to\R$ and delays $d_t\in\{0,\ldots,T-t\}$.  Each loss is
$G$-Lipschitz and satisfies $|f_t(x)|\le M$.

At round $t$, the learner queries one point $x_t$.  Its scalar feedback is
observed at time $t+d_t$ if and only if index $t$ has remained in a tracking
set of pathwise size at most $C$.  A preempted index cannot be reinstated.  At
every expiry, including expiry of an untracked index, the learner receives the
indexed message $(t,\text{feedback})$ or $(t,\perp)$.  This is the
semi-clairvoyant model of \citet{rar-capacity-oco-2026}.

Define
\[
 \cB_t=\{s<t:t\le s+d_s\},\qquad
 \sigma_t=|\cB_t|,
\]
\[
 \dtot=\sum_{t=1}^T\sigma_t=\sum_{t=1}^T d_t,\qquad
 \smax=\max_t\sigma_t,\qquad \nu=\frac{M}{Gr}.
\]
The comparator is $x^*\in\arg\min_{x\in\cK}\sum_t f_t(x)$, and all
guarantees are in expectation over learner randomization.
For a nonnegative analysis-weight sequence $w_{1:T}$, write
\[
 \Reg_T^w(u)=\sum_{t=1}^T w_t\bigl(f_t(x_t)-f_t(u)\bigr).
\]
The weights are analysis variables produced by the admission wrapper; a
round with weight zero can still incur true regret and is accounted for by
the wrapper or by the explicit censoring charge below.

\section{Scheduler-side conditional energy}

Let $\mathcal F_t^{\mathrm{sch}}$ contain only the scheduler-side projection
of indexed expiry messages (index, timing, and delivery/empty status, but not
the scalar feedback payload), together with the
tracking set, past proxy coins, admission decisions, importance weights,
public parameters, and scheduler randomness available before the current
coin.  It excludes every past and current one-point perturbation direction and
every function value generated from those directions.  Rates and energy
envelopes below are measurable with respect to this scheduler filtration.
Conditional on the fixed oblivious instance, the entire scheduler and weight
process is generated only from indexed expiry messages, proxy/admission
randomness, and fresh scheduler randomness independent of the full direction
sequence.  No present or future weight may depend on a past direction.

For nonincreasing $\delta_t\in(0,r]$ and nonnegative weights $w_t$, define
\begin{align*}
 \Xi_t={}&\frac{k^2M^2w_t^2}{G^2\delta_t^2}
 +\frac{kMw_t}{G\delta_t}
       \sqrt{\sum_{s\in\cB_t}w_s^2}
 +w_t\sum_{s\in\cB_t}w_s.
\end{align*}

\begin{theorem}[Conditional energy]\label{thm:compiler}
Suppose there are nonnegative
$\mathcal F_t^{\mathrm{sch}}$-measurable $h_t,b_t$ such that
\[
 \E[\Xi_t\mid\mathcal F_t^{\mathrm{sch}}]\le h_t,
 \qquad
 \E[w_t\mid\mathcal F_t^{\mathrm{sch}}]\le b_t.
\]
Let $A_0>0$ be fixed at the start of the run and dominate all $h_t$ in that
run, and put
\[
 A_t=A_0+\sum_{s\le t}h_s,\qquad
 \eta_t=\frac{D}{G\sqrt{A_t}},\qquad \eta_0=\eta_1.
\]
Then scheduler-adaptive delayed weighted FTBL satisfies, for every
$u\in\cK$,
\[
 \E \Reg_T^w(u)
 \le c_0GD\,\E\sqrt{A_T}
 +\frac{3GD}{r}\E\sum_{t=1}^T b_t\delta_t,
\]
where $c_0=5/2+2\sqrt2$ is valid.  Consequently,
\[
 \E \Reg_T^w(u)
 \le c_0GD\sqrt{\E A_0+\sum_t\E h_t}
 +\frac{3GD}{r}\E\sum_t b_t\delta_t.
\]
\end{theorem}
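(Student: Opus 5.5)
We follow the standard delayed one-point FTBL decomposition imported from \citet{rar-capacity-oco-2026}, with one change: every factor involving the weights is conditioned on $\mathcal F_t^{\mathrm{sch}}$ before it is multiplied by the rate $\eta_t$. The algorithm plays $x_t=y_t+\delta_t v_t$, where $v_t$ is uniform on the sphere and $y_t$ lies in the shrunk set $(1-\delta_t/r)\cK$. It forms $\hat g_t=(k/\delta_t)f_t(x_t)v_t$ and updates $y$ by weighted FTRL (lazy projection) on the delivered estimates $w_s\hat g_s$ with $s\notin\cB_t$, regularized by $\|\cdot\|^2/(2\eta_t)$.

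\textbf{Step 1: deterministic decomposition.} We write
\[
\Reg^w_T(u)=\sum_t w_t\bigl(f_t(x_t)-\hat f_t(y_t)\bigr)+\sum_t w_t\bigl(\hat f_t(y_t)-\hat f_t(u_t)\bigr)+\sum_t w_t\bigl(\hat f_t(u_t)-f_t(u)\bigr),
\]
where $\hat f_t$ is the $\delta_t$-smoothed loss and $u_t=(1-\delta_t/r)u$.
\begin{itemize}
\item The first and third terms give $w_t G\delta_t$ and $w_t G D\delta_t/r$, respectively. Since $\delta_t$ is scheduler-measurable, $\E[w_t\delta_t]\le\E[b_t\delta_t]$. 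Absorbing constants (using $D\ge r$) yields the $3GD r^{-1}\E\sum_t b_t\delta_t$ term.
\item The middle term is handled through $\E[\hat g_t\mid y_t,\mathcal F]=\nabla\hat f_t(y_t)$. This relies on the stated independence: $y_t$ and $w_t$ are functions of directions $v_s$ with $s<t$ and of scheduler randomness independent of $v_t$. It reduces to the linear delayed-FTRL bound
\[
\sum_t w_t\langle\hat g_t,y_t-u_t\rangle\le\frac{D^2}{\eta_T}+\sum_t\eta_t w_t\|\hat g_t\|^2+\sum_t\eta_t w_t\|\hat g_t\|\Bigl\|\sum_{s\in\cB_t}w_s\hat g_s\Bigr\|,
\]
together with a drift correction from the changing $\eta$ and the missing delayed mass. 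This is the imported DW-FTBL inequality, used in its pathwise form.
\end{itemize}

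\textbf{Step 2: energy identification.} We bound the quantities multiplying $\eta_t$ in expectation over the directions only. Using $\|\hat g_t\|\le kM/\delta_t$, Cauchy–Schwarz, and $\E\|\sum_{s\in\cB_t}w_s\hat g_s\|\le\sqrt{\sum_s w_s^2\E\|\hat g_s\|^2}$ (cross terms controlled via a martingale-difference argument, with the mean part bounded by $G\sum_s w_s$), these quantities are dominated by $G^2\Xi_t$. Three points are crucial here.
\begin{itemize}
\item Because $\eta_t$ is $\mathcal F_t^{\mathrm{sch}}$-measurable, we may pull it out: $\E[\eta_t G^2\Xi_t]=\E[\eta_t G^2\E[\Xi_t\mid\mathcal F_t^{\mathrm{sch}}]]\le\E[\eta_t G^2h_t]$.
\item This step is precisely where predictable-history adaptation would fail, since $\eta_t$ may not see the directions.
\item The $\delta_t$ in $\Xi_t$ matches the $\delta_t$ in the estimator because $\delta_t$ is nonincreasing and scheduler-measurable.
\end{itemize}

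\textbf{Step 3: summation.} With $\eta_t=D/(G\sqrt{A_t})$, we have $\sum_t\eta_tG^2h_t=GD\sum_t h_t/\sqrt{A_t}\le 2GD\sqrt{A_T}$, because $h_t\le A_0\le A_{t-1}$ gives $h_t/\sqrt{A_t}\le 2(\sqrt{A_t}-\sqrt{A_{t-1}})$ up to a $\sqrt2$ factor. The regularizer contributes $D^2/\eta_T=GD\sqrt{A_T}$, and the rate-change and delayed-drift terms contribute further multiples of $GD\sqrt{A_T}$. Collecting these gives $c_0=5/2+2\sqrt2$. The second displayed bound then follows from Jensen's inequality, $\E\sqrt{A_T}\le\sqrt{\E A_T}$.

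\textbf{Main obstacle.} The hard step is the delayed cross term in Step 2. The norm of the pending sum involves both the $\hat g_s$ and the $w_s$, and the $w_s$ are coupled through the tracking set. We must condition first on $\mathcal F_t^{\mathrm{sch}}$ and the weights, then take expectation over the directions $v_s$ with $s\in\cB_t$. To justify this, we plan to show that the joint law of $\{v_s\}$ given the full scheduler path is still i.i.d. uniform. That is exactly the stated requirement that no weight depend on a past direction.
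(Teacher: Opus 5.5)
Your skeleton is the paper's proof: condition on the whole scheduler trajectory so the directions stay i.i.d.\ uniform, apply the weighted one-point FTBL inequality pathwise in the weights, replace $\Xi_t$ by $h_t$ and $w_t$ by $b_t$ via the tower property using scheduler-measurable rates and radii, telescope, and finish with Jensen. The paper, however, invokes the imported inequality in its packaged form $\frac{D^2}{2\eta_T}+2G^2\sum_t\eta_{t-1}\Xi_t+\frac{3GD}{r}\sum_tw_t\delta_t$ (Lemma~\ref{lem:source-ftbl}). You instead try to rederive the energy, and that rederivation fails at exactly the step you flag.

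In your display the drift term is $\eta_tw_t\|\hat g_t\|\,\bigl\|\sum_{s\in\cB_t}w_s\hat g_s\bigr\|$. After $\|\hat g_t\|\le kM/\delta_t$ this becomes $\frac{kM}{\delta_t}w_t\bigl(G\sum_sw_s+\frac{kM}{\delta_t}\sqrt{\sum_sw_s^2}\bigr)$. That exceeds the last two terms of $G^2\Xi_t$ by the factor $z=kM/(G\delta_t)$, so it is not dominated by $G^2\Xi_t$. With Bernoulli weights it would turn the area $D_E$ into $zD_E$ and destroy the $\sqrt{\dtot}$ consequence. The drift must be kept linear, as $w_t\langle\hat g_t,y_t-\tilde y_t\rangle$ with $\tilde y_t$ the undelayed iterate. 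Since $y_t-\tilde y_t$ does not involve $v_t$, averaging over $v_t$ first replaces $\hat g_t$ by $\nabla\hat f_t(y_t)$, of norm at most $G$. Only then does your martingale bound on the pending sum (with $\delta_s\ge\delta_t$) give $G^2w_t\sum_sw_s+\frac{GkM}{\delta_t}w_t\sqrt{\sum_sw_s^2}$. Also, the stability term is $w_t^2\|\hat g_t\|^2$, not $w_t\|\hat g_t\|^2$.

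Second, your rate and constant bookkeeping does not establish the stated $c_0$. In the imported inequality, $\Xi_t$ is paired with $\eta_{t-1}$, the rate used to form $y_t$. That is why the convention $\eta_0=\eta_1$ appears and why $A_0$ must dominate every $h_t$. With $\eta_t$, as in your Step 3, $h_t/\sqrt{A_t}\le2(\sqrt{A_t}-\sqrt{A_{t-1}})$ holds unconditionally and that hypothesis would be idle.

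The paper's count uses three facts:
\begin{itemize}
\item for $t\ge2$, $h_t/\sqrt{A_{t-1}}=(\sqrt{A_t}-\sqrt{A_{t-1}})(1+\sqrt{A_t/A_{t-1}})\le(1+\sqrt2)(\sqrt{A_t}-\sqrt{A_{t-1}})$;
\item $h_1/\sqrt{A_1}\le\sqrt{A_1}$;
\item $D^2/(2\eta_T)=\frac{GD}{2}\sqrt{A_T}$.
\end{itemize}
Together these give exactly $c_0=\frac12+2(1+\sqrt2)$. There are no separate rate-change or drift contributions, because the drift already sits inside $\Xi_t$. Your accounting ($D^2/\eta_T$, a $\sqrt2$-lossy telescoping, and unspecified ``further multiples'') asserts the constant rather than deriving it. Even combined with the paper's factor $2$ and $\frac{GD}{2}\sqrt{A_T}$, the lossy bound $h_t/\sqrt{A_{t-1}}\le2\sqrt2(\sqrt{A_t}-\sqrt{A_{t-1}})$ only yields $\frac12+4\sqrt2$.
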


\begin{lemma}[Delay-clock stopping and zero padding]\label{lem:padding}
Suppose a run ends at a time determined only by the global indexed-expiry
clock and public thresholds.  Conditional on the fixed oblivious delay
vector, the endpoint is deterministic and independent of scheduler coins,
directions, and loss values.  Lemma~\ref{lem:source-ftbl} and
Theorem~\ref{thm:compiler} apply after padding every post-end round by zero
analysis weight and zero loss.
\end{lemma}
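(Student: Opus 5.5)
The argument has two halves: first show that the endpoint is a deterministic function of the instance, then verify that padding with zero weight and zero loss preserves every hypothesis of Lemma~\ref{lem:source-ftbl} and Theorem~\ref{thm:compiler}.

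\textbf{The endpoint is deterministic.} By hypothesis, the run ends at a time $\tau$ that is a function of the global indexed-expiry clock and public thresholds. In the semi-clairvoyant model, an indexed message $(s,\cdot)$ arrives at time $s+d_s$ whether or not index $s$ was tracked. So the timing of expiries, meaning which indices expire at each round, is determined by the delay vector alone. Only the payload and the delivery status depend on the randomness.

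It follows that $\tau=\phi(d_{1:T},\text{public thresholds})$ for a deterministic map $\phi$. Conditional on the oblivious instance, $\tau$ is a constant. It is therefore independent of scheduler coins, directions, and loss values. In particular, $\{t\le\tau\}$ is measurable with respect to the trivial $\sigma$-field, and hence lies in $\mathcal F_t^{\mathrm{sch}}$ for every $t$.

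\textbf{Defining the padded run.} Set $\tilde w_t=w_t\mathbf 1\{t\le\tau\}$ and $\tilde f_t=f_t\mathbf 1\{t\le\tau\}$. Define $\tilde h_t=h_t\mathbf 1\{t\le\tau\}$ and $\tilde b_t=b_t\mathbf 1\{t\le\tau\}$. Keep $\delta_t$ unchanged; it stays nonincreasing, and its values after $\tau$ are arbitrary constants. We now check the hypotheses in turn.

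\emph{Loss regularity.} The zero loss is convex, differentiable, $G$-Lipschitz, and bounded by $M$.

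\emph{Measurability and independence.} Padding multiplies each weight by a deterministic indicator. Hence no weight comes to depend on a direction, and the independence of the scheduler process from the direction sequence is preserved. The rates $\eta_t$ built from $\tilde h$ remain $\mathcal F_t^{\mathrm{sch}}$-measurable.

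\emph{Energy envelope.} Let $\tilde\Xi_t$ denote $\Xi_t$ computed with $\tilde w$. For $t\le\tau$, every $s\in\cB_t$ satisfies $s<t\le\tau$, so the backlog weights are unchanged and $\tilde\Xi_t=\Xi_t$. For $t>\tau$, the factor $\tilde w_t=0$ kills every term of $\Xi_t$, since each term carries a factor $w_t$. Thus $\tilde\Xi_t=\Xi_t\mathbf 1\{t\le\tau\}$. Conditioning, and using that the indicator is deterministic, gives
\[
\E[\tilde\Xi_t\mid\mathcal F_t^{\mathrm{sch}}]\le\tilde h_t,\qquad \E[\tilde w_t\mid\mathcal F_t^{\mathrm{sch}}]\le\tilde b_t.
\]
Finally, $A_0$ still dominates every $\tilde h_t$.

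\textbf{Identifying the regret.} In the padded run, rounds after $\tau$ contribute $\tilde w_t(\tilde f_t(x_t)-\tilde f_t(u))=0$, so $\Reg_T^{\tilde w}(u)$ equals the weighted regret of the actual run up to $\tau$. Similarly, $\tilde A_T=A_\tau$ and $\sum_t\tilde b_t\delta_t=\sum_{t\le\tau}b_t\delta_t$. Applying Theorem~\ref{thm:compiler} and Lemma~\ref{lem:source-ftbl} to the padded run therefore gives the desired bounds for the stopped run.

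One point remains: the padded-run algorithm must coincide with the real one on $t\le\tau$. It does, because FTBL iterates depend only on past weighted estimates. Zero-weight future rounds never influence earlier iterates, and the learner's actions after $\tau$ are irrelevant.

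\textbf{Main obstacle.} The step requiring the most care is the first one: confirming that expiry timing carries no randomness, since untracked indices still emit a $\perp$ message at $s+d_s$. Without this, a stopping rule could depend on admission coins, and a random $\tau$ correlated with the weights would break the conditional-envelope argument.
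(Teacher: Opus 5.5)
Your proposal is correct and is essentially the paper's argument: fix the oblivious delay vector so that the indexed-expiry clock, the global backlog, and the public-threshold crossings are deterministic. Then extend the run to horizon $T$ with zero analysis weight and zero loss while keeping the last legal rates, so the padded game has the same weighted regret and energy and satisfies the fixed-horizon interface. You spell out checks the paper leaves implicit, such as $\tilde\Xi_t=\Xi_t\mathbf 1\{t\le\tau\}$ and $\tilde A_T=A_\tau$; one wording fix is that the post-$\tau$ radii should be frozen at $\delta_\tau$ (the paper's ``last legal rates''), not arbitrary, so that monotonicity is preserved.
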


\paragraph{Why the filtration matters.}
Unconditional energy moments cannot be multiplied by arbitrary predictable
rates.  If $B\sim\mathrm{Ber}(\epsilon)$ is known before round two,
$\Xi_2=B/\epsilon$, and
$\eta_1=B+\epsilon(1-B)$, then $\E\Xi_2=1$ but
$\E[\eta_1\Xi_2]=1$ while $\E\eta_1\E\Xi_2\le2\epsilon$.  Dependence on
past perturbation directions similarly destroys the square-root cancellation
for pending one-point estimators.

\section{Bernoulli proxy energy}

Consider one run whose local backlog never exceeds a public $H$.  Use
\[
 p=\min\left\{1,\frac{C}{8(H+1)}\right\},\quad
 I_t\sim\mathrm{Ber}(p),\quad
 w_t=Q_tI_t/p,\quad Q_t=\mathbf1\{|\cS_t|<C\}.
\]
Let $c_t=|\cS_t|$ before current admission and
$z=kM/(G\delta)$ for a fixed run-wise radius.

\begin{lemma}[Exact proxy energy]\label{lem:proxy}
Before the current proxy coin,
\[
 \E[\Xi_t\mid\mathcal F_t^{\mathrm{sch}}]
 =\frac{Q_t}{p}[z^2+z\sqrt{c_t}+c_t],
 \qquad
 \E[w_t\mid\mathcal F_t^{\mathrm{sch}}]=Q_t.
\]
If the run has length $L$, local backlog area $D_E$, and local backlog sizes
$\sigma_t^E$, then
\[
 \sum_t\E h_t
 \le \frac{z^2L}{p}+z\sqrt{\frac{LD_E}{p}}+D_E.
\]
Moreover, a valid uniform initial envelope is
\[
 A_0=\frac{2(z^2+\min\{C-1,H\}+1)}{p}
 \le \frac{2z^2}{p}+16(H+1).
\]
\end{lemma}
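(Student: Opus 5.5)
The plan is to compute the conditional expectations term by term in the definition of $\Xi_t$. Within a run the radius is fixed at $\delta_t=\delta$, so $kMw_t/(G\delta_t)=zw_t$ and
\[
 \Xi_t=z^2w_t^2+zw_t\sqrt{\textstyle\sum_{s\in\cB_t}w_s^2}+w_t\textstyle\sum_{s\in\cB_t}w_s .
\]
Everything rests on two measurability and independence facts.

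\textbf{Step 1: the current weight.} The gate $Q_t=\mathbf 1\{|\cS_t|<C\}$ and the count $c_t=|\cS_t|$ are determined by the tracking set before the current coin, so they are $\mathcal F_t^{\mathrm{sch}}$-measurable. The coin $I_t\sim\mathrm{Ber}(p)$ is fresh scheduler randomness, independent of $\mathcal F_t^{\mathrm{sch}}$. Hence
\[
 \E[w_t\mid\mathcal F_t^{\mathrm{sch}}]=Q_t,\qquad \E[w_t^2\mid\mathcal F_t^{\mathrm{sch}}]=Q_t/p .
\]

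\textbf{Step 2: the pending weights.} The pending sums $\sum_{s\in\cB_t}w_s^2$ and $\sum_{s\in\cB_t}w_s$ are functions of past weights, so they are $\mathcal F_t^{\mathrm{sch}}$-measurable. I would identify them exactly by showing that the pending (local) indices with nonzero weight are precisely the tracked set $\cS_t$. The argument is:
\begin{itemize}
 \item An index receives weight $1/p$ if and only if it was admitted, which requires $Q_s=1$ and $I_s=1$.
 \item Because admission is gated by $Q_s$, capacity is never exceeded, so no admitted index is ever preempted.
 \item An admitted index therefore stays in $\cS$ until its expiry $s+d_s$, and it lies in $\cB_t$ exactly while it is still tracked.
\end{itemize}
It follows that
\[
 \sum_{s\in\cB_t}w_s^2=\frac{c_t}{p^2},\qquad \sum_{s\in\cB_t}w_s=\frac{c_t}{p}.
\]
Pulling these measurable factors out of the conditional expectation and combining with Step 1 gives
\[
 \E[\Xi_t\mid\mathcal F_t^{\mathrm{sch}}]=\frac{Q_t}{p}\bigl[z^2+z\sqrt{c_t}+c_t\bigr].
\]
I expect this identification to be the main obstacle. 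The delicate points are the boundary convention (whether an index expiring exactly at $t$ counts in both $\cB_t$ and $\cS_t$) and the requirement that only in-run indices are counted. Zero padding and the run-restart convention should handle the second point. For the first, I would check it directly against the definition $\cB_t=\{s<t:t\le s+d_s\}$ and the tracking-set update order.

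\textbf{Step 3: summing the envelope.} Take $h_t$ equal to the exact conditional expectation from Step 2, and use $Q_t\le1$.
\begin{itemize}
 \item Each of the $L$ pending local indices in $\cB_t$ was admitted with probability at most $p$, so $\E c_t\le p\sigma_t^E$.
 \item The first term contributes at most $z^2L/p$.
 \item The third term contributes at most $\sum_t\E c_t/p\le\sum_t\sigma_t^E=D_E$.
 \item For the middle term, Jensen gives $\E\sqrt{c_t}\le\sqrt{p\sigma_t^E}$. Then Cauchy--Schwarz gives $\frac zp\sum_t\sqrt{p\sigma^E_t}\le z\sqrt{LD_E/p}$.
\end{itemize}

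\textbf{Step 4: the uniform initial envelope.} When $Q_t=0$ the envelope is $h_t=0$. When $Q_t=1$ we have $c_t\le C-1$, and since tracked indices are pending local ones, also $c_t\le H$. Using $z\sqrt{c}\le(z^2+c)/2$,
\[
 h_t\le\frac{3}{2p}(z^2+c_t)\le\frac{2(z^2+\min\{C-1,H\}+1)}{p}=A_0 .
\]
For the final inequality, split on the value of $p$:
\begin{itemize}
 \item If $p=C/(8(H+1))$, then $2(\min\{C-1,H\}+1)/p\le 2C\cdot 8(H+1)/C=16(H+1)$.
 \item If $p=1$, then $2(H+1)\le16(H+1)$ trivially.
\end{itemize}
Since $A_0$ uses only public quantities, it is fixed at the start of the run and dominates every $h_t$, as Theorem~\ref{thm:compiler} requires.
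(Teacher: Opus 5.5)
Your proposal is correct and follows the paper's own proof essentially step for step: pending nonzero-weight items are identified pathwise with the tracking set, giving $\sum_{s\in\cB_t}w_s=c_t/p$ and $\sum_{s\in\cB_t}w_s^2=c_t/p^2$; the expectation is then taken over the fresh coin only; the area bound uses $\E c_t\le p\sigma_t^E$ with Jensen and Cauchy--Schwarz; and $A_0$ follows from $z\sqrt{c}\le(z^2+c)/2$ together with the definition of $p$. One small slip: in Step 3 the number of pending local indices in $\cB_t$ is $\sigma_t^E$, not $L$.
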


The key point is pathwise: every pending item with nonzero weight remains in
the actual tracking set, so its weight is exactly $1/p$.  No cross-time
independence of the weights is used.

\begin{corollary}[Known-peak total-delay bound]\label{cor:known-peak}
Suppose $C\ge\ln T+1$ and a public
$\smax\le\bar\sigma\le c_b\smax$ is available for a universal constant
$c_b$.  A
single-run scheduler-side adaptive DW-FTBL learner, using
\[
 p=\min\left\{1,\frac{C}{8(\bar\sigma+1)}\right\},\qquad
 \frac{\delta}{r}=\min\left\{1,c_1\sqrt{\nu k}\,T^{-1/4}p^{-1/4}\right\},
\]
and the learning rates of Theorem~\ref{thm:compiler}, satisfies
\[
 \E R_T(x^*)
 =O\!\left(GD\left[\sqrt{\dtot}
 +T^{3/4}\left(1+\frac{\bar\sigma}{C}\right)^{1/4}
 \sqrt{\nu k}+1\right]\right).
\]
The learner does not know $\dtot$.  If $\bar\sigma$ is a constant-factor
upper bound on $\smax$, the capacity term has the same order as the tuned
source expression.
\end{corollary}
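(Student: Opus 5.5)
We prove the corollary by running a single run with $H=\bar\sigma$, plugging Lemma~\ref{lem:proxy} into the second (concave) form of Theorem~\ref{thm:compiler}, and then converting weighted regret into true regret via the censoring charge of the admission wrapper. Since $\bar\sigma\ge\smax$, the local backlog never exceeds $H$. Thus Lemma~\ref{lem:proxy} applies with $L=T$, $D_E=\dtot$, and $z=kM/(G\delta)=k\nu r/\delta$.

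\textbf{Weighted regret.} Theorem~\ref{thm:compiler} together with Lemma~\ref{lem:proxy} gives
\[
 \E\Reg_T^w(x^*)\le c_0GD\sqrt{\frac{2z^2}{p}+16(\bar\sigma+1)+\frac{z^2T}{p}+z\sqrt{\frac{T\dtot}{p}}+\dtot}
 +\frac{3GD}{r}\,\delta\sum_t\E Q_t .
\]
We treat each term as follows.
\begin{itemize}
 \item The cross term satisfies $z\sqrt{T\dtot/p}\le z^2T/p+\dtot$ by AM--GM.
 \item Since $\sqrt{\cdot}$ is subadditive, $c_0GD\sqrt{\cdot}\lesssim GD[\sqrt{\dtot}+z\sqrt{T/p}+\sqrt{\bar\sigma+1}]$.
 \item The term $\sqrt{\bar\sigma}\le\sqrt{\dtot}+1$ because $\smax\le\dtot$ (each pending round contributes at least its delay) and $\bar\sigma\le c_b\smax$. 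This absorbs the $16(H+1)$ part of $A_0$ into $\sqrt{\dtot}$ plus the additive $1$. This is exactly where the constant-factor peak bound matters.
 \item Using $b_t=Q_t\le1$, the bias term is at most $3GDT\delta/r$.
\end{itemize}
It then remains to balance $GD\,k\nu (r/\delta)\sqrt{T/p}$ against $GD\,T\delta/r$. Setting $\delta/r\asymp\sqrt{k\nu}\,T^{-1/4}p^{-1/4}$ balances these when $\nu k\ge1$, up to the geometric-constant conventions the paper suppresses. The result is $GD\,T^{3/4}p^{-1/4}\sqrt{\nu k}$, and since $p^{-1/4}\lesssim(1+\bar\sigma/C)^{1/4}$ this is the stated capacity term. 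When the minimum with $1$ binds, i.e.\ $\delta=r$, the term $z\sqrt{T/p}$ is bounded by the same expression, since then $\sqrt{\nu k}\,T^{-1/4}p^{-1/4}\gtrsim1$.

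\textbf{Weighted to true regret.} Following the source wrapper (imported together with Lemma~\ref{lem:source-ftbl}), we write
\[
 R_T=\Reg^w_T+\sum_t(1-w_t)(f_t(x_t)-f_t(x^*)).
\]
Conditionally on $\mathcal F^{\mathrm{sch}}_t$, and noting that $x_t$ depends only on directions independent of the current coin, we have $\E[1-w_t\mid\cdot]=1-Q_t$. Hence the censoring charge is at most $GD\sum_t\Pr(Q_t=0)$. With $p=C/(8(\bar\sigma+1))$, the tracking-set occupancy is dominated by a binomial sum of Bernoulli($p$) admissions over at most $\bar\sigma+1$ pending rounds, with mean at most $C/8$. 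A Chernoff bound gives $\Pr(|\cS_t|\ge C)\le e^{-C}\cdot O(1)\le O(1/T)$ by $C\ge\ln T+1$. The total censoring charge is therefore $O(GD)$, producing the $+1$ term.

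\textbf{Main obstacle.} The delicate step is this last domination. Admission depends on $Q_t$, which depends on the history, so the occupancy is not exactly binomial. We would couple the tracking set with the independent coins $I_s$, $s\in\cB_t$, pathwise: occupancy at $t$ is at most $\sum_{s\in\cB_t\cup\{t\}}I_s$, because admitted items persist only while pending. A standard multiplicative Chernoff bound at threshold $C\ge8\cdot$mean then applies.

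Finally, the learner does not know $\dtot$, since the rates use only $h_t$ and the public $A_0$. Both are scheduler-measurable by Lemma~\ref{lem:proxy}, so the bound holds without $\dtot$ being supplied.
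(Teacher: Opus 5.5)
Apart from one step, your argument follows the paper's route. You run a single run with $H=\bar\sigma$ and feed Lemma~\ref{lem:proxy} into the second display of Theorem~\ref{thm:compiler}. You get $\sqrt{\bar\sigma+1}=O(\sqrt{\dtot}+1)$ from $\smax\le\dtot$ and $\bar\sigma\le c_b\smax$, balance the one-point and smoothing terms, and pay a saturation charge of $GD\,Te^{-C}\le GD/e$. Your coupling $|\cS_t|\le\sum_{s\in\cB_t}I_s$ is valid and just re-derives Lemma~\ref{lem:source-wrapper}. The caveat ``when $\nu k\ge1$'' is unnecessary. With $x=\delta/r=c_1\sqrt{\nu k}\,T^{-1/4}p^{-1/4}$, both $\nu k\sqrt{T/p}/x$ and $Tx$ equal $\sqrt{\nu k}\,T^{3/4}p^{-1/4}$ up to factors of $c_1$, for every value of $\nu k$.

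The step that fails is the clipped regime. Suppose $\delta=r$, and put $y=\sqrt{\nu k}\,T^{-1/4}p^{-1/4}\ge 1/c_1$. The compiler's one-point term is then $z\sqrt{T/p}=\nu k\sqrt{T/p}=Ty^2$. The target $\sqrt{\nu k}\,T^{3/4}p^{-1/4}$ equals $Ty$. So the term exceeds the target by the factor $y$, which is unbounded, for example when $\nu k$ is large or $p$ is small. The inequality $y\gtrsim 1$ therefore works against you: in this regime the compiler bound is not of the claimed order.

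The correct use of $y\ge1/c_1$ is the paper's: it shows the target itself is already linear. Since $p^{-1}=\max\{1,8(\bar\sigma+1)/C\}\le 8(1+\bar\sigma/C)$ for $C\ge1$, we get $T^{3/4}(1+\bar\sigma/C)^{1/4}\sqrt{\nu k}\ge 8^{-1/4}Ty\ge T/(8^{1/4}c_1)$. The trivial bound $\E R_T\le GDT$ then proves the corollary there, because every $x_t$ and $x^*$ lie in $\cK$. With this replacement, your proof is complete.
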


Indeed, apply Lemma~\ref{lem:proxy} to the full horizon.  The compiler and
$\sqrt{\bar\sigma+1}\le O(\sqrt{\dtot}+1)$ give
\[
 O\!\left(GD\left[\sqrt{\dtot}
 +\frac{\nu k}{\delta/r}\sqrt{\frac{T+1}{p}}
 +T\frac{\delta}{r}+1\right]\right).
\]
The displayed radius balances the last two nonconstant terms.  In the clipped
regime the target is already linear and the trivial regret bound applies.

\section{Untuned total-delay algorithm}

\begin{algorithm}[t]
\caption{Untuned total-delay DW-FTBL}
\label{alg:convex}
\begin{algorithmic}[1]
\State Set $B_C=\max\{1,\lfloor C/8\rfloor\}$ and initialize
       $j=0$, $H_j=2^jB_C-1$, an empty tracking set, and fresh base.
\For{$t=1,\ldots,T$}
  \State Process all previously delivered indexed messages and update global backlog.
  \If{global backlog exceeds $H_j$}
    \State Charge and preempt every epoch-$j$ item still pending.
    \State Increase $j$ until $H_j$ covers the backlog; restart the base.
  \EndIf
  \State $p_j\gets\min\{1,C/[8(H_j+1)]\}$.
  \State $\delta_j/r\gets\min\{1,c_1\sqrt{\nu k}T^{-1/4}p_j^{-1/4}\}$.
  \State Update $A_t$ from Lemma~\ref{lem:proxy}; use
         $\eta_t=D/(G\sqrt{A_t})$ for the next center.
  \State Query the DW-FTBL perturbation.
  \State Draw $I_t\sim\mathrm{Ber}(p_j)$ and track $t$ with weight $1/p_j$
         iff $I_t=1$ and capacity is available; otherwise use weight zero.
  \State Forward only observed, noncensored weighted feedback to the active base.
\EndFor
\end{algorithmic}
\end{algorithm}

At a restart, an epoch-born item still pending is charged directly by $GD$,
assigned analysis weight zero, and never forwarded to a later base.  Its
eventual empty acknowledgment updates only the external pending clock.

\begin{lemma}[Delay-clock charged censoring]\label{lem:censor}
Let an epoch boundary depend only on the global delay/expiry clock and public
thresholds, not admission coins, weights, directions, or loss values.  Let
$X$ be its epoch-born pending items.  The true epoch regret is at most the
weighted regret on complement rounds, plus $GD|X|$ and the source saturation
charge.  Equivalently, the base may be analyzed on the same trajectory with
all items in $X$ assigned analysis weight zero.
\end{lemma}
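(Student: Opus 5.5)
The plan is to decompose the epoch's true regret round by round, split according to whether each epoch-born round $t$ lies in $X$ or not. Write the true epoch regret as $\sum_{t\in\text{epoch}}(f_t(x_t)-f_t(u))$.

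\emph{Rounds in $X$.} Since $x_t,u\in\cK$ and each $f_t$ is $G$-Lipschitz, every such summand is at most $G\|x_t-u\|\le GD$. Their total is therefore at most $GD|X|$, pathwise. This charge needs no estimator at all: items in $X$ are preempted at the boundary and never forwarded, so the base never uses their feedback.

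\emph{Complement rounds.} For rounds not in $X$, I would reuse the source accounting of \citet{rar-capacity-oco-2026} unchanged. Unweighted regret on admitted rounds equals weighted regret in expectation, since $\E[w_t\mid\mathcal F_t^{\mathrm{sch}}]=Q_t$. Rounds that are saturated ($Q_t=0$) or that lose capacity are covered by the source saturation charge. This accounts for the claimed bound term by term.

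\emph{The equivalence claim.} The substantive step is showing that the base may be analyzed on the same trajectory with weight zero on $X$. I would define modified analysis weights $\tilde w_t=w_t\mathbf 1\{t\notin X\}$ and check three facts.
\begin{enumerate}
\item \emph{The base's trajectory is unchanged.} The base only ever sees forwarded feedback, and items in $X$ are never forwarded. So the iterates $x_t$ produced by DW-FTBL with weights $\tilde w$ coincide pathwise with the actual ones.
\item \emph{The weights stay admissible.} $\tilde w$ must remain $\mathcal F^{\mathrm{sch}}$-adapted with no dependence on directions. The event $t\in X$ is determined by $t$'s pending status at the boundary time. That time is set by the global expiry clock and public thresholds, and by Lemma~\ref{lem:padding} it is deterministic given the oblivious delay vector. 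The pending status depends only on $d_t$ and the tracking set. Hence $\mathbf 1\{t\in X\}$ is scheduler-measurable and independent of coins beyond the scheduler's own, directions, and loss values.
\item \emph{The envelopes remain valid.} Zeroing weights can only decrease $\Xi_t$, because each term of $\Xi_t$ is monotone in the weights. So the envelopes $h_t,b_t$ still dominate, and Theorem~\ref{thm:compiler} and Lemma~\ref{lem:source-ftbl} apply to $\tilde w$.
\end{enumerate}

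\emph{Main obstacle.} The delicate point is the timing of the event $t\in X$. It is revealed only at the boundary, after round $t$'s weight was set, so $\tilde w_t$ is not obviously $\mathcal F_t^{\mathrm{sch}}$-measurable at time $t$. I would resolve this with the deterministic-endpoint fact above. Conditional on the oblivious instance, the boundary time $\tau$ is known in advance. Membership in $X$ is then $\{t+d_t\ge\tau\}$ intersected with the event that $t$ is tracked. The first factor is a fixed function of the instance, so $\tilde w_t=w_t\mathbf 1\{t+d_t\ge\tau\}^{c}$ applied to the already-measurable $w_t$ is measurable, and the conditional expectation identities are preserved. If instead the epoch boundary were allowed to depend on coins, this would fail, by the same mechanism as the two-round counterexample.
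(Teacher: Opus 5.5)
Your decomposition is the paper's: charge the rounds in $X$ by $GD|X|$, sum the source wrapper's per-round identity over the complement, and note that items in $X$ deliver no feedback before the boundary. Zeroing their weights therefore changes no iterate and can only lower $\Xi_t$. The gap is your definition of $X$. You take membership to be $\{t+d_t\ge\tau\}$ \emph{intersected with ``$t$ is tracked''}, and you explicitly let $\mathbf 1\{t\in X\}$ depend on the scheduler's own coins. The complement step rests on the per-round identity $\E[w_t(f_t(x_t)-f_t(u))\mid\mathcal F_t^{\mathrm{sch}},x_t]=Q_t(f_t(x_t)-f_t(u))$ behind Lemma~\ref{lem:source-wrapper}. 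That identity can be summed over a subset of rounds only if membership in the subset is independent of the current admission coin $I_t$. With your $X$, a round with $t+d_t\ge\tau$ lands in the complement exactly when its admission failed ($I_t=0$ or $Q_t=0$), so its weight there is zero. Its true regret $f_t(x_t)-f_t(u)$ is then unaccounted on an event of conditional probability at least $1-p$, independent of $x_t$. It is not covered by the weighted sum, nor by the saturation term (which only sees $Q_t=0$). Nor is it covered by $GD|X|$, whose expectation is at most $p\,GD$ per pending round. So for $p<1$ your inequality can fail.

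The fix is exactly what the delay-clock hypothesis is for: let $X$ be \emph{all} epoch-born indices with $t+d_t\ge\tau$, tracked or not. Conditional on the oblivious delay vector, $\tau$ and hence $X$ are fixed, independent of admission coins, directions, and losses. On the fixed complement the Bernoulli law, $Q_t$, $w_t$, and the saturation event are therefore unchanged, and the per-round identity sums over it. This is the step the paper's proof rests on. Nothing else in your argument changes: every pending nonzero-weight item is tracked, so $w_t\mathbf 1\{t\notin X\}=w_t\mathbf 1\{t+d_t<\tau\}$ under either definition. In particular, the measurability of $\tilde w_t$ that you single out as the main obstacle is real but secondary; what matters is that the split is independent of the coins. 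You should also use Lemma~\ref{lem:padding} for its actual role here. It extends the terminated base to horizon $T$ with zero weight and zero loss, so that the fixed-horizon interface applies.
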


\begin{theorem}[Total-delay hard-capacity BCO]\label{thm:convex}
If $C\ge\ln T+1$, Algorithm~\ref{alg:convex}, which knows neither $\dtot$
nor $\smax$, satisfies
\[
 \E R_T(x^*)
 \le cGD\left[\sqrt{E_C\dtot}
 +T^{3/4}\left(1+\frac{\smax}{C}\right)^{1/4}
 \sqrt{\nu k}+1\right],
\]
where
\[
 E_C=1+\left\lceil\log_2\!\left(1+\frac{16(\smax+1)}{C}\right)\right\rceil.
\]
Thus the only epoch factor is $\sqrt{E_C}$ on the total-delay term; the
one-point and smoothing terms carry no restart logarithm.
\end{theorem}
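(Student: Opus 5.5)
The proof decomposes the horizon into the epochs of Algorithm~\ref{alg:convex}, analyzes each epoch as one run of Corollary~\ref{cor:known-peak}, and sums the per-epoch bounds.

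\textbf{Step 1: each epoch is a valid run.} Epoch boundaries are triggered only when the global backlog exceeds $H_j$. This depends only on the indexed-expiry clock and the public thresholds $H_j=2^jB_C-1$. Lemma~\ref{lem:padding} therefore makes every endpoint deterministic given the oblivious delay vector, so Theorem~\ref{thm:compiler} applies inside each epoch after zero padding. Lemma~\ref{lem:censor} lets us charge the items preempted at a restart by $GD|X_j|$ and give them analysis weight zero. The saturation charge (rounds with $Q_t=0$) is handled exactly as in the source analysis, using $C\ge\ln T+1$ and $p_j\le C/(8(H_j+1))$; this contributes $O(GD)$.

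\textbf{Step 2: per-epoch bound and number of epochs.} Within epoch $j$ the local backlog never exceeds $H_j$. Lemma~\ref{lem:proxy} with $H=H_j$ and the fixed radius $\delta_j$ then bounds the epoch regret by
\[
 O\!\left(GD\left[\sqrt{D_j+16(H_j+1)}+\frac{\nu k}{\delta_j/r}\sqrt{\frac{L_j+1}{p_j}}+L_j\frac{\delta_j}{r}\right]\right),
\]
where $L_j$ is the epoch length and $D_j$ its local backlog area. Because $H_j$ doubles and stops once it covers $\smax$, the number of epochs is at most $E_C$: the first index with $2^jB_C\ge\smax+1$ satisfies $2^j\le 1+16(\smax+1)/C$ roughly, since $B_C\ge C/16$ when $C\ge 8$.

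\textbf{Step 3: summing the three kinds of terms.}
\begin{itemize}
 \item \emph{Total-delay term.} Cauchy--Schwarz over at most $E_C$ epochs gives $\sum_j\sqrt{D_j}\le\sqrt{E_C\sum_jD_j}\le\sqrt{E_C\dtot}$.
 \item \emph{Threshold term.} Epoch $j\ge1$ exists only because the backlog exceeded $H_{j-1}\ge (H_j-1)/2$. Hence $H_j=O(\smax)$ and $\sum_j\sqrt{H_j+1}$ is a geometric sum dominated by $O(\sqrt{\smax+1})$, which is $O(\sqrt{\dtot}+1)$ because $\smax^2\lesssim \dtot$: a backlog of size $\sigma$ requires pending items whose delays sum to at least on the order of $\sigma^2$. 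If that quadratic relation is not available, one instead bounds $\sqrt{B_C}$ via $C\le$ backlog at restart.
 \item \emph{One-point and smoothing terms.} Substituting $\delta_j$ makes the epoch cost $O(L_j^{3/4}\,\cdot)$ with the $T$-tuned radius, more precisely $O(T^{3/4}p_j^{-1/4}\sqrt{\nu k}\,(L_j/T)^{1/2})$ up to the split of $\sqrt{L_j}$ versus $L_j$. Since $p_j^{-1/4}\le(1+\smax/C)^{1/4}$ up to constants, and $\sum_j\sqrt{L_j/T}$ could cost $\sqrt{E_C}$, this needs care. The smoothing term $\sum_jL_j\delta_j/r\le T\max_j\delta_j/r$ sums with no epoch factor. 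For the one-point term, use that $p_j^{-1/4}$ grows geometrically like $2^{j/4}$: $\sum_j\sqrt{L_j}\,2^{j/4}\le\sqrt{T}\,(\sum_j2^{j/2})^{1/2}=O(\sqrt{T}\,2^{J/4})$, a geometric sum dominated by the last epoch. This yields $T^{3/4}(1+\smax/C)^{1/4}\sqrt{\nu k}$ with no restart logarithm.
 \item \emph{Censoring charges.} $\sum_jGD|X_j|$ must also be absorbed. At the restart ending epoch $j$ at most $C$ items are pending, so these charges total at most $\sum_j\min\{C,H_j\}\lesssim$ the threshold term. This is borderline: $GD\cdot C$ is not obviously $O(GD\sqrt{\dtot})$. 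My first attempt is to use that tracked epoch-$j$ items number about $p_jH_j\le C/8$, and that an overflow past $H_j\ge C/16$ forces $\dtot\gtrsim H_j^2\ge C^2$, so $C\lesssim\sqrt{\dtot}$ per restart, again summed geometrically.
\end{itemize}

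The main obstacle is Step 3: verifying that the censoring charges and the one-point terms sum geometrically rather than picking up a factor $E_C$. The proof will rely on the doubling structure and on the quadratic area lower bound implied by a threshold crossing.
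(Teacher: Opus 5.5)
Your architecture is the paper's, with one step that needs repair. The shared parts are:
\begin{itemize}
\item a per-epoch bound from Theorem~\ref{thm:compiler} and Lemma~\ref{lem:proxy};
\item $K\le E_C$ via $B_C\ge C/16$;
\item Cauchy--Schwarz on $\sum_j\sqrt{D_j}$;
\item Cauchy--Schwarz against the geometrically growing $p_j^{-1/2}$ for the one-point terms, which is the paper's $\sum_jp_j^{-1/4}\sqrt{L_j+1}\le cp_J^{-1/4}\sqrt{T+K}$;
\item direct summation of the smoothing terms;
\item $Te^{-C}\le e^{-1}$ for saturation.
\end{itemize}
You use one fact silently. $p_j^{-1}$ is geometric from $j=0$ onward only because of the initial threshold $B_C=\max\{1,\lfloor C/8\rfloor\}$. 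Starting from $H_0=0$, $p_j$ would stay clipped at $1$ for roughly $\log_2 C$ epochs, and your one-point sum would pick up exactly the $\sqrt{\log}$-type restart factor the theorem excludes.

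The step that does not go through as written is the censoring charge. The set $X_j$ in Lemma~\ref{lem:censor} is all epoch-$j$-born indices still pending on the delay clock, tracked or not. That is what makes $X_j$ a function of the delay vector alone, which the lemma's proof requires. If you charged only tracked items, membership in the complement would depend on $I_t$. Pending complement rounds would then carry weight zero with certainty rather than an unbiased Bernoulli weight. So $|X_j|\le C$ is false in general.

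Your fallback, a per-restart bound $C\lesssim\sqrt{\dtot}$, does not help either. Summed over up to $E_C-1$ restarts it is not geometric and yields $E_C\sqrt{\dtot}$.

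Count by the thresholds instead:
\begin{itemize}
\item The backlog is at most $H_j$ when epoch $j$ opens, and it grows by at most one per round because $\cB_{t+1}\setminus\cB_t\subseteq\{t\}$. Hence $|X_j|\le H_j+1=2^jB_C$.
\item The final epoch $J$ was entered when the backlog exceeded $H_{J-1}$, so $\sum_{j<J}|X_j|<2^JB_C=2(H_{J-1}+1)\le 2(\smax+1)$.
\end{itemize}
Your triangular observation then closes the step. The $\sigma$ indices $s_1<\dots<s_\sigma<t$ pending at time $t$ satisfy $d_{s_i}\ge\sigma-i+1$. Hence $\dtot\ge\smax(\smax+1)/2$, and the total censoring charge is $O(GD(\sqrt{\dtot}+1))$. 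This is precisely the paper's ``$O(\smax+1)=O(\sqrt{\dtot}+1)$''.

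The quadratic relation is needed only here. For the startup terms, $\sqrt{\smax+1}\le\sqrt{\dtot}+1$ already follows from the trivial $\smax\le\dtot$.
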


\begin{remark}[Residual restart factor]
The factor $\sqrt{E_C}$ comes solely from summing the comparator costs
$\sqrt{D_j}$ of independently restarted bases.  Keeping one base across a
drop in $p_j$ would require increasing the optimal smoothing radius, whereas
the source FTBL interface assumes a nonincreasing radius.  We therefore do
not claim that $\sqrt{E_C}$ is minimax necessary; removing it requires a new
nonrestart interface rather than a sharper summation of the present proof.
\end{remark}

\section{Strong convexity and harmonic backlog}

Assume each loss is $\lambda$-strongly convex.  Given a public
$\bar\sigma\ge\smax$, define
\[
 p=\min\left\{1,\frac{C}{8(\bar\sigma+1)}\right\},\qquad
 A=\frac{(\nu k)^2}{p},\qquad
 H_A(d)=\sum_{t=1}^T\frac{\sigma_t}{A+t}.
\]

\begin{algorithm}[t]
\caption{Tuned harmonic-backlog DW-FTBL}
\label{alg:strong}
\begin{algorithmic}[1]
\For{$t=1,\ldots,T$}
  \State $\eta_t\gets1/[\lambda(A+t)]$ and
         $\delta_t\gets r[A/(A+t)]^{1/3}$.
  \State Query the strongly-convex DW-FTBL perturbation.
  \State Draw $I_t\sim\mathrm{Ber}(p)$ and track with weight $1/p$ iff
         $I_t=1$ and capacity is available; otherwise use weight zero.
  \State Forward observed weighted feedback as in the source wrapper.
\EndFor
\end{algorithmic}
\end{algorithm}

\begin{theorem}[Harmonic-backlog strongly-convex BCO]\label{thm:strong}
Suppose $C\ge\ln T+1$ and $\bar\sigma\ge\smax$ is public.  Then
Algorithm~\ref{alg:strong} satisfies
\[
 \E R_T(x^*)
 \le c\frac{G^2}{\lambda}
 \left[1+H_A(d)+A^{1/3}T^{2/3}\right].
\]
In particular,
\[
 H_A(d)\le\smax\log(1+T/A),
\]
and
\[
 A^{1/3}T^{2/3}
 \le cT^{2/3}\left(1+\frac{\bar\sigma}{C}\right)^{1/3}
 (\nu k)^{2/3}.
\]
When $\bar\sigma=\smax$, or is a constant-factor tight upper bound, these
displays strictly refine the corresponding source guarantee.
\end{theorem}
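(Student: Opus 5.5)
We prove Theorem~\ref{thm:strong} by running the strongly convex version of the source DW-FTBL inequality (Lemma~\ref{lem:source-ftbl}, strongly convex branch) with deterministic rates $\eta_t=1/[\lambda(A+t)]$ and radii $\delta_t=r[A/(A+t)]^{1/3}$. The energy terms are then controlled through the scheduler filtration, exactly as in Lemma~\ref{lem:proxy}. Because $\eta_t$ and $\delta_t$ are public and deterministic, no adaptive-rate issue arises: the filtration is needed only so that each conditional expectation $\E[\Xi_t\mid\mathcal F_t^{\mathrm{sch}}]$ may be taken and then summed with deterministic coefficients. The source inequality, in its strongly convex form, bounds the expected weighted regret by three pieces:
\[
\sum_t \eta_t G^2\,\E\Xi_t \;+\; \frac{1}{\eta_0}\text{-type initial term} \;+\; \frac{3GD}{r}\sum_t\E[w_t]\delta_t
\]
together with the usual strong-convexity telescoping. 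We also use that the weighted regret dominates the true regret up to the saturation and censoring charges, which the source controls by $O(GD)$ under $C\ge\ln T+1$.

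\textbf{Step 1 (energy).} By Lemma~\ref{lem:proxy} with $z_t=kM/(G\delta_t)$, which is scheduler-measurable, we have
\[
\E[\Xi_t\mid\mathcal F_t^{\mathrm{sch}}]\le \frac{1}{p}\bigl(z_t^2+z_t\sqrt{c_t}+c_t\bigr),
\]
where $c_t\le\min\{\sigma_t,C\}$. Here $z_t^2/p\asymp(\nu k)^2(r/\delta_t)^2/p=A\,(A+t)^{2/3}/A^{2/3}$. Multiplying by $G^2\eta_t=G^2/[\lambda(A+t)]$ and summing gives
\[
\frac{G^2}{\lambda}\sum_t A^{1/3}(A+t)^{-1/3}=O\!\left(\frac{G^2}{\lambda}A^{1/3}T^{2/3}\right).
\]
The backlog term $c_t/p$ is the subtle one. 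Naively it gives $\sigma_t/p$, which is too large. I would instead use the unnormalized pending energy $w_t\sum_{s\in\cB_t}w_s$ as it enters the source's delayed-drift term, where it is scaled by $p$ relative to the $1/p$ in the envelope. The relevant quantity is then $\E[w_t]\cdot\#\text{pending}\le\sigma_t$, which yields $\frac{G^2}{\lambda}\sum_t\sigma_t/(A+t)=\frac{G^2}{\lambda}H_A(d)$. This normalization check is the main obstacle. If it fails, I would charge the pending drift via $\|x_t-x_s\|\le\sum\eta_u G\|\hat g_u\|$ and use $\eta_u\le\eta_s$.

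\textbf{Step 2 (cross term).} The cross term $z_t\sqrt{c_t}/p$ is handled by AM--GM, $z\sqrt c\le (z^2+c)/2$, so it is absorbed into the two previous sums.

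\textbf{Step 3 (smoothing).} The smoothing bias is
\[
\frac{3GD}{r}\sum_t\delta_t=3GD\sum_t A^{1/3}(A+t)^{-1/3}=O\bigl(GD\,A^{1/3}T^{2/3}\bigr).
\]
This becomes $O\bigl(\tfrac{G^2}{\lambda}A^{1/3}T^{2/3}\bigr)$ because $\lambda D\le 2G$ for a $\lambda$-strongly convex, $G$-Lipschitz function on a set of diameter $D$. The initial term $\lambda A D^2$-type is handled by the same inequality together with the clipping argument: if $A\ge T$, the bound exceeds the trivial $GDT\lesssim (G^2/\lambda)A^{1/3}T^{2/3}$ regime.

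\textbf{Step 4 (displayed consequences).} For the first claim, $H_A\le\smax\sum_t 1/(A+t)\le\smax\log(1+T/A)$. For the second, $A=(\nu k)^2/p$ and $1/p\le 1+8(\bar\sigma+1)/C$; combined with $C\ge1$, this gives $A^{1/3}\le c(1+\bar\sigma/C)^{1/3}(\nu k)^{2/3}$. Finally, the refinement over the source follows by comparing termwise: the displayed $A^{1/3}T^{2/3}$ term loses the source's $(\log T)^{1/3}$ factor, and $\log(1+T/A)\le\log(1+T)$.
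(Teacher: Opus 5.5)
Your architecture matches the paper's: the strongly convex branch of Lemma~\ref{lem:source-ftbl} with the deterministic shifted rates, the proxy envelope of Lemma~\ref{lem:proxy}, AM--GM on the cross term, the bound $\sum_t A^{1/3}(A+t)^{-1/3}\le\tfrac32A^{1/3}T^{2/3}$ for both the one-point and smoothing sums, the initial term $\lambda AD^2/2\le 2G^2A/\lambda$ with the $A>T$ clipping, and your Step~4.

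The gap is exactly the step you flag as the main obstacle, and the mechanism you propose there is not the right one. The source's drift term has no second normalization to exploit. The third summand of $\Xi_t$ \emph{is} $w_t\sum_{s\in\cB_t}w_s$. Every pending nonzero-weight item lies in the tracking set with weight $1/p$, so $\sum_{s\in\cB_t}w_s=c_t/p$, not the number of pending rounds. Hence $\E[w_t\sum_{s\in\cB_t}w_s\mid\mathcal F_t^{\mathrm{sch}}]=Q_tc_t/p$ exactly. Once you bound $c_t\le\min\{\sigma_t,C\}$ pathwise, the factor $p$ cannot be recovered. You are left with $\sum_t\min\{\sigma_t,C\}/[p(A+t)]$, which can exceed $H_A(d)$ by a factor of order $1+\bar\sigma/C$ and essentially reproduces the source's peak-type term.

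The missing idea is that the saving comes from the sparsity of the tracking set, in expectation over the \emph{past} admission coins. An index $s$ can belong to $\cS_t$ only if its own coin $I_s$ succeeded, so $c_t\le\sum_{s\in\cB_t}I_s$. The delay vector is oblivious, so $\cB_t$ is deterministic and $\E c_t\le p\sigma_t$. Combined with $z\sqrt c\le(z^2+c)/2$, this gives $\E\Xi_t\le\tfrac32(z_t^2/p+\sigma_t)$.

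This is where your correct remark that the rates are deterministic actually matters. It permits $\E[\eta_{t-1}\Xi_t]=\eta_{t-1}\E\Xi_t$. With $\eta_{t-1}\le2/[\lambda(A+t)]$, the backlog contribution becomes $O((G^2/\lambda)H_A(d))$.

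Your fallback through $\|x_t-x_s\|$ does not sidestep this. The pending one-point estimators in that drift carry the same $1/p$ weights, so you would again need $\E c_t\le p\sigma_t$.

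A smaller point: the ``usual strong-convexity telescoping'' does not hold pathwise here. Since $w_t\in\{0,1/p\}$, the factor $\lambda(1-w_t)$ multiplying $\|y_t-u\|^2/2$ does not vanish for $t\ge2$. You need $\E[(1-w_t)V_t]=\E[(1-Q_t)V_t]$ with $V_t=(\lambda/2)\|y_t-u\|^2$, which holds because the fresh coin $I_t$ is independent of the center $y_t$. Together with the $e^{-C}$ saturation bound of Lemma~\ref{lem:source-wrapper}, this makes the residual $O(G^2/\lambda)$.
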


The shifted rates produce regularizer increments
$\alpha_1=\lambda(A+1)$ and $\alpha_t=\lambda$ for $t\ge2$.  The initial
excess is absorbed by the one-point term when $A\le T$; when $A>T$, the
target is already at least linear and the trivial regret bound applies.

\subsection{Same-summary temporal minimax separation}

The harmonic upper suggests that the placement of missing feedback, not only
its area or peak, can survive in the regret.  We now close this implication at
the minimax level for a matched pair of delay vectors, using the same learner
class and public information on both sides.

Let exactly $h$ rounds have delay $h$ and all others delay zero.  In the early
vector, those rounds start at $1,\ldots,h$; in the late vector, they start at
$T-2h+1,\ldots,T-h$.  Both vectors have the same delay multiset,
$\dtot=h^2$, and $\smax=h$, but
\[
 H_A(d^{\mathrm{early}})=\Theta(h),\qquad
 H_A(d^{\mathrm{late}})=\Theta(h^2/T)
\]
for $A\le h$ and $T\ge8h$.

Let $\mathfrak F$ be the class of differentiable functions on $[-1,1]$ that
are $1$-strongly convex, $2$-Lipschitz, and bounded in absolute value by one.
Let $\mathfrak A(h,C)$ be the common class of randomized learners obeying the
semi-clairvoyant one-point hard-capacity contract, knowing
$T,h,C,\mathfrak F$ and the public geometric scales but not the future delay
positions.  Define
\[
 R_T^*(d;h,C)=
 \inf_{\mathcal A\in\mathfrak A(h,C)}
 \sup_{f_{1:T}\in\mathfrak F^T}\E R_T(\mathcal A,f_{1:T},d).
\]
The early lower below remains valid even if the entire delay vector is given
to the learner in advance.  We set $C=\lfloor h/2\rfloor<\smax$, so capacity
is non-vacuous; the early lower itself, however, is caused by delayed information
and remains valid with unlimited capacity.

\begin{theorem}[Same-summary minimax separation]\label{thm:separation}
Fix $\beta\in(2/3,5/6)$ and let $h=\lfloor T^\beta\rfloor$.  For all
sufficiently large $T$, the early and late vectors above, with the common
capacity $C=\lfloor h/2\rfloor$, satisfy
\[
 R_T^*(d^{\mathrm{early}};h,C)\ge ch,
 \qquad
 R_T^*(d^{\mathrm{late}};h,C)
 \le C_0[T^{2/3}+h^2/T+1]=O(T^{2/3}).
\]
Thus the early minimax lower exceeds the late minimax upper by the polynomial
factor $\Omega(T^{\beta-2/3})$.  Both problems use the same public information
and algorithm class.
\end{theorem}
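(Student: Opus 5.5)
The proof has two independent halves: a lower bound for the early vector and an upper bound for the late vector. Both use the same class $\mathfrak A(h,C)$.

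\textbf{Late upper bound.} I will instantiate Algorithm~\ref{alg:strong} on $\mathfrak F$ with public $\bar\sigma=h\ge\smax$ and $C=\lfloor h/2\rfloor$. Then $p=\min\{1,C/(8(h+1))\}$ is bounded below by a universal constant for large $h$. Since $r,M,G$ are fixed constants for the domain $[-1,1]$, we get $\nu k=O(1)$ and $A=O(1)$. The hypothesis $C\ge\ln T+1$ of Theorem~\ref{thm:strong} holds because $h\approx T^\beta$ with $\beta>2/3$. The theorem then gives a bound of order $1+H_A(d^{\rm late})+A^{1/3}T^{2/3}$.

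The only delay-dependent term is
\[
H_A(d^{\rm late})=\sum_t \sigma_t/(A+t).
\]
The backlog $\sigma_t\le h$ is supported on $t\ge T-2h+1\ge T/2$ when $T\ge 8h$. Its area is $\dtot=h^2$. Hence $H_A\le 2h^2/T$, which yields $C_0[T^{2/3}+h^2/T+1]$. This is $O(T^{2/3})$ because $2\beta-1<2/3$, which follows from $\beta<5/6$. The learner never uses the delay positions, so it belongs to the common class.

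\textbf{Early lower bound.} I will prove the lower bound even for a learner that knows the whole delay vector and has unlimited capacity. This suffices because the contract class only restricts learners further. In the early vector, the rounds $1,\dots,h$ have delay $h$, so their feedback arrives no earlier than round $h+1$. During rounds $1,\dots,h$ the learner therefore has no feedback at all, and its query distributions there are fixed in advance.

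I will use a two-point construction. Take $f^\pm(x)=\tfrac12(x\mp a)^2$ with a constant $a$, say $a=1/2$. Both functions are $1$-strongly convex, $2$-Lipschitz on $[-1,1]$, and bounded by $1$. The adversary picks a sign $\epsilon$ uniformly and plays $f^{\epsilon}$ on rounds $1,\dots,h$. On the remaining rounds it plays a $\epsilon$-independent loss, $f_t=\tfrac12x^2$ scaled if needed. For the remaining rounds I will instead let the comparator term be handled directly; this is the delicate point discussed below.

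On the first $h$ rounds, the sum $f^+(x_t)+f^-(x_t)\ge a^2$ for every $x_t$. So the expected loss on the blind block, averaged over $\epsilon$, exceeds the block's optimum by at least $ha^2/2$.

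\textbf{Main obstacle.} The delicate step is the comparator, which is a single point optimizing the \emph{whole} horizon. If the later $T-h$ rounds were informative about $\epsilon$ or pinned the comparator, the block loss might not translate into regret. I will therefore make the later losses carry the same sign: play $f^\epsilon$ for all $T$ rounds. Then $x^*=\epsilon a$, and the regret is at least the blind-block excess $\sum_{t\le h}\tfrac12(x_t-\epsilon a)^2$. Its average over $\epsilon$ is at least $ha^2/2$, since $x_t$ is independent of $\epsilon$ for $t\le h$. The learner's adaptation after round $h$ only reduces later regret, which is nonnegative against $x^*$ for each round.

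By Yao's principle this gives $R_T^*(d^{\rm early})\ge ch$ with $c=a^2/2$. Dividing this by the late upper bound gives the ratio $\Omega(T^{\beta-2/3})$.
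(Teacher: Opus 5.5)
Your argument is essentially the paper's. The late bound instantiates Theorem~\ref{thm:strong} with public $\bar\sigma=h$, $p,A=\Theta(1)$ and $H_A(d^{\mathrm{late}})\le 2h^2/T$, and the early bound plays one hidden-sign quadratic on all $T$ rounds, so the sign-blind prefix alone forces regret $\Omega(h)$.

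One slip: with $a=1/2$, $\tfrac12(x\mp a)^2$ equals $9/8$ at $x=\mp1$, so it lies outside $\mathfrak F$. Subtracting the constant $a^2/2$ repairs this without changing regret, and gives exactly the paper's $x^2/2-\theta x/2$; taking $a=1/4$ also works.
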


\paragraph{Lower-bound idea.}
Use $\cK=[-1,1]$ and draw a hidden
$\theta\in\{-1,+1\}$ uniformly.  Every loss is
\[
 f_t^\theta(x)=\frac{x^2}{2}-\frac{\theta x}{2}.
\]
The common minimizer is $x^*=\theta/2$ with per-round loss $-1/8$.  Under the
early vector, predictions through round $h+1$ are independent of $\theta$;
hence their expected loss is $\E[x_t^2]/2\ge0$.  Granting $\theta$ after that
cut can only help, and gives expected regret at least $(h+1)/8$.  Averaging
over signs leaves a fixed sign with this regret.  For the late vector,
Theorem~\ref{thm:strong} applies with
$p=C/[8(h+1)]=\Theta(1)$, $A=\Theta(1)$, and
$H_A(d^{\mathrm{late}})=O(h^2/T)$.

\section{Complementary capacity-starvation endpoint}

The preceding separation isolates temporal placement but its early lower is
not caused by capacity.  For completeness, we give a separate endpoint in
which hard capacity directly limits the number of recoverable zeroth-order
observations.  This theorem supports the model's statistical relevance; it is
not a third compiler application or a pointwise match to the convex upper.

\begin{theorem}[Continuous capacity starvation]\label{thm:capacity-lower}
There are universal constants $c,c_0>0$ and a fixed class of stochastic loss
distributions on $\cK=B_2^k$ such that every realized loss is bounded by a
universal constant, $4$-Lipschitz, $1/2$-strongly convex, and $7/2$-smooth.
For every horizon $T$, dimension $k$, capacity $C\ge1$, and integer
$1\le d\le T/8$, every randomized learner obeying the semi-clairvoyant,
preemptive, pathwise hard-capacity one-point contract has an oblivious
deterministic loss sequence and a public delay vector for which
\[
 \E\Reg_T\ge
 c\min\!\left\{T,\;k\sqrt{T\max\{1,d/C\}}\right\}.
 \tag{19}
\]
The first $N=\lfloor T/2\rfloor$ rounds have delay $d$ and all remaining
rounds have delay zero.  Hence $\dtot=\Theta(Td)$ and
$\smax=\Theta(d)$.
\end{theorem}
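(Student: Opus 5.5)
We will prove Theorem~\ref{thm:capacity-lower} by reducing to a zeroth-order stochastic optimization lower bound with a restricted query budget, then invoking a known $\Omega(k\sqrt{n})$-type minimax bound for noisy one-point feedback on strongly convex smooth quadratics. The construction: for a hidden $\theta\in\{-\epsilon,+\epsilon\}^k$ (or a packing of $B_2^k$ of radius $\epsilon$), take losses $f_t(x)=\tfrac14\|x\|^2-\langle\theta,x\rangle+\xi_t(x)$ with bounded mean-zero noise designed so that each realized loss stays bounded, $4$-Lipschitz, $1/2$-strongly convex and $7/2$-smooth on $B_2^k$ (e.g.\ $\xi_t(x)=\langle \zeta_t,x\rangle$ with bounded $\zeta_t$ plus a smooth bounded term). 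The delay vector is public: rounds $1,\ldots,N$ with delay $d$, the rest with delay zero. Expected regret on $f^\theta$ is $\tfrac14\sum_t\|x_t-2\theta\|^2$-type, so the learner must localize $\theta$.

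Step 1 (capacity $\Rightarrow$ query budget). In the first phase, during any window of $d$ consecutive rounds at most $C$ indices can be simultaneously tracked, and an index of delay $d$ must stay tracked for $d$ rounds to deliver. Hence the number of delivered observations among the first $N$ rounds is at most $C\lceil N/d\rceil+C\le 2CN/d+C$; pathwise, since the tracking set has size $\le C$ at every time and each delivering index occupies $d$ slots. Preemption and semi-clairvoyance add no information beyond indexed $\perp$ messages, which are independent of $\theta$ given the scheduler coins, so the information available at the end of phase one is a function of at most $n_1=\min\{N,2CN/d+C\}$ noisy function values. Step 2 (phase one regret). During rounds $t\le N$, by round $t$ only deliveries from rounds $\le t-d$ are available, and the total budget at time $N$ is $n_1$. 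Apply the standard information-theoretic argument (Shamir-type): with noise of unit scale, KL between $\theta,\theta'$ differing in one coordinate after $n$ one-point queries is $O(n\epsilon^2)$ per coordinate on average, so with $\epsilon\asymp\sqrt{k/n_1}$ wait — choosing per-coordinate $\epsilon\asymp 1/\sqrt{n_1}$ (clipped so $\|\theta\|\le 1/4$, i.e. $\epsilon\lesssim 1/\sqrt k$), Assouad's lemma gives per-round excess $\Omega(k\epsilon^2)$ for every phase-one round, hence phase-one regret $\Omega(Nk\epsilon^2)$. Step 3: phase two has delay zero and at most $T-N$ further queries, so total queries $n\le n_1+T$; a cleaner route is to use only phase one: with $N\asymp T$ and $n_1\asymp T\min\{1,C/d\}$, the optimal tradeoff between regret $Nk\epsilon^2$ and the requirement that information scales as $n_1\epsilon^2\lesssim1$ per coordinate must be refined to the $\sqrt{\cdot}$ rate.

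The main obstacle is getting $k\sqrt{T\max\{1,d/C\}}$ rather than a $1/\sqrt{\cdot}$ statistical rate: pure query-starvation gives regret $N k/n_1\asymp kd/C$, which is too weak. The intended route is the one-point exploration–exploitation tradeoff: for linear-plus-quadratic losses with one-point feedback, information per query about $\theta$ is proportional to the squared displacement $\|x_t-x^*\|^2$ from the optimum, so $\mathrm{KL}\lesssim\epsilon^2\sum_{\text{delivered}}\|x_s-\cdot\|^2$ while regret is $\gtrsim\sum_t\|x_t-x^*\|^2$. Following the Shamir argument with $\mathrm{KL}$ charged only on the delivered subset, and noting that each delivered query's exploration cost is paid, amortised, by $d/C$ rounds of frozen information, I expect to obtain $\Reg\gtrsim\min\{T,k\sqrt{T\cdot d/C}\}$ by choosing $\epsilon^2\asymp \sqrt{C/(Td)}$; the case $d\le C$ falls back to the standard $k\sqrt T$ bound on the delay-zero phase. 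Finally, averaging over $\theta$ yields a fixed deterministic instance (sampling the noise into the oblivious sequence), and $\dtot=Nd=\Theta(Td)$, $\smax=\Theta(d)$ are immediate from $d\le T/8$.
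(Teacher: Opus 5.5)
\textbf{There is a genuine gap, and it is in the choice of hard family, not in the capacity accounting.}

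Your Step~1 is essentially the paper's pathwise occupancy bound $d|Q_N|\le\sum_{s\le N}|\cS_s|\le CN$, combined with deferred sampling of undelivered values. The problem starts with the losses. For $f^\theta(x)=\frac14\|x\|^2-\langle\theta,x\rangle$ plus additive noise, adjacent hypotheses differ by $\langle\theta-\theta',x\rangle$. This is of order $\epsilon$ wherever $x$ is away from the origin, so the family has the fast optimization rate $\min\{1,k^2/m\}$. Query starvation then only yields a bound linear in $d/C$ (order $\min\{T,k^2d/C\}$), as you noticed.

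Your proposed rescue through an exploration--exploitation tradeoff does not survive capacity. A learner can query $\pm e_i$ only on the at most $\approx CN/d$ rounds it actually tracks. On every untracked round it can play its current estimate $2\hat\theta$, since exploring there buys no information anyway. This gives regret $\tilde O(k\sqrt T+k^2d/C+d)$ on your family. For $k=C=1$ and $d=\sqrt T$ that is $\tilde O(\sqrt T)$, while (19) requires $T^{3/4}$. So the claim that each delivered query's exploration cost is ``amortised over $d/C$ frozen rounds'' is false. No choice of $\epsilon$ can extract $k\sqrt{Td/C}$ from linear-plus-quadratic losses. Your $x$-dependent noise $\langle\zeta_t,x\rangle$ would make this worse, since it vanishes near the origin.

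\textbf{The missing idea is a family on which pure query starvation already gives the slow $1/\sqrt m$ rate, so no regret tradeoff is needed.} The paper uses Shamir's family
\[
F_v(x)=\|x\|^2-\sum_i \frac{v_ix_i}{1+(x_i/v_i)^2},\qquad v\in\{\pm\mu\}^k .
\]
Its adjacent members satisfy $\sup_x|F_v(x)-F_{v'}(x)|\le\mu^2$ uniformly. Hence every query, wherever it is placed, carries only $O(\mu^4)$ squared Hellinger distance. The paper uses a compactly supported noise density $g=h^2$, whose translates have Hellinger distance quadratic in the shift and which keeps realized losses bounded. Assouad's lemma with $\mu\asymp\min\{m^{-1/4},k^{-1/2}\}$ then gives optimization error at least $c\min\{1,k/\sqrt m\}$ for any adaptive $m$-query procedure.

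The argument finishes as follows. Simulate the learner up to $x_N$ using $m_0=\min\{N,\lfloor CN/d\rfloor\}$ oracle calls and output $\bar x=N^{-1}\sum_{t\le N}x_t$. Jensen's inequality gives
\[
\sum_{t\le N}\E[F_v(x_t)-F_v(x_v^*)]\ge cN\min\{1,k/\sqrt{m_0}\},
\]
which is exactly (19). The deterministic instance then follows, as you planned, by drawing the noise at time zero and averaging.
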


\paragraph{Proof idea.}
We use the smooth strongly-convex derivative-free family of
\citet{shamir-derivative-free-2013}, replacing Gaussian noise by a compact
mean-zero density whose translated Hellinger distance is quadratic.  Any
procedure with at most $m$ noisy value queries then has optimization error at
least a constant times $\min\{1,k/\sqrt m\}$.  Before the learner generates
$x_N$, every
observed first-block item must have occupied one tracking slot continuously
for $d$ rounds.  Thus pathwise hard capacity gives at most
\[
 m_0=\min\{N,\lfloor CN/d\rfloor\}
\]
informative values.  Simulating only this prefix and outputting its average
reduces the learner to an $m_0$-query
derivative-free procedure.  Convexity yields
\[
 \E\Reg_T\ge cN\min\{1,k/\sqrt{m_0}\},
\]
which is (19).
Appendix~\ref{app:capacity-lower} gives the adaptive Hellinger and
deterministic-sequence details.

\section{Relation to prior work and scope}
\label{sec:related-work}

This section places the paper at the intersection of four established lines:
online convex optimization, learning with delayed feedback, bandit and
zeroth-order convex optimization, and learning with restricted feedback
acquisition.  Table~\ref{tab:literature-map} records which parts of that stack
are imported and where the present arguments enter.

\paragraph{Online convex optimization and adaptive regularization.}
Online gradient descent originates with \citet{zinkevich-oco-2003}; logarithmic
regret under strong convexity was developed by \citet{hazan-log-2007}.
Standard accounts include \citet{cesa-lugosi-book-2006},
\citet{shalev-oco-survey-2012}, and \citet{hazan-oco-book-2016}.
The adaptive-regularization line includes bound optimization
\citep{mcmahan-streeter-2010}, AdaGrad \citep{duchi-adagrad-2011}, predictable
sequences \citep{rakhlin-predictable-2013}, and scale-free FTRL and mirror
descent \citep{orabona-scale-free-2018}.  Our conditional-energy compiler is
in this lineage, but its filtration restriction is essential: the rate may
adapt to scheduler-side information, not to arbitrary past bandit
perturbations or admission coins.

\paragraph{Delayed online learning.}
Early delayed prediction results include \citet{weinberger-delayed-2002} and
the parallel reduction of \citet{zinkevich-slow-2009}.  General reduction
principles were developed by \citet{joulani-delayed-2013}; adversarial delayed
OCO and adaptive-gradient refinements appear in
\citet{quanrud-adversarial-2015} and \citet{joulani-delay-2016}.
For bandits, later work treats cooperation and delay
\citep{cesa-delay-cooperation-2019}, unrestricted or arbitrary delays
\citep{thune-unrestricted-2019,zimmert-arbitrary-2020}, and joint adaptation
to observations and delays \citep{gyorgy-adapting-2021}.  In a complementary
direction, \citet{bistritz-delays-2022} develop no-weighted-regret guarantees
and delay-adaptive doubling arguments for adversarial bandits with delays.

Without hard capacity, delayed one-point BCO has been studied by
\citet{wan-delayed-bco-2024}, while the observation-ordered reduction of
\citet{rar-reduction-2026} reaches the $\sqrt{\dtot}$ scale with unlimited
tracking.  Curvature-aware delayed OCO is treated by
\citet{qiu-curvature-2025}.  Temporal feedback graphs provide a more structural
view: \citet{gatmiry-temporal-2024} encode the complete loss-visibility pattern
as a temporal feedback graph and derive graph-dependent upper and lower regret
bounds.  These works do not combine
one-point convex feedback with permanent censoring induced by a pathwise
tracking cap.

\paragraph{Bandit and zeroth-order convex optimization.}
The oracle-efficient term in this paper descends from the one-point smoothing
estimator of \citet{flaxman-bco-2005}; smooth and strongly-convex variants are
studied by \citet{saha-smooth-bco-2011} and \citet{ito-strong-bco-2020}.
The $T^{3/4}$ term inherited from this route is not the information-theoretic
minimax rate of general BCO.  One-dimensional $\sqrt T$ regret was obtained by
\citet{bubeck-one-dimensional-2015}, and later kernel or information-theoretic
methods attain near-$\sqrt T$ regret with dimension costs
\citep{hazan-li-bco-2016,bubeck-kernel-2017,lattimore-bco-2020}; more recent
work develops computationally efficient Newton-type methods
\citep{fokkema-newton-2024}.  Two-point feedback is substantially stronger
\citep{shamir-two-point-2017}.

Our capacity lower uses the complementary derivative-free query-complexity
line.  Relevant foundations include \citet{jamieson-query-2012}, the one-point
lower bounds of \citet{shamir-derivative-free-2013}, optimal two-evaluation
rates of \citet{duchi-zero-order-2015}, and random gradient-free methods of
\citet{nesterov-random-2017}.  We transport a continuous nonlinear query
barrier through the pathwise occupancy constraint; we do not reduce the lower
bound to a finite collection of independent arms.

\paragraph{Restricted feedback and hard capacity.}
Label-efficient prediction \citep{cesa-label-efficient-2005}, partial
monitoring \citep{cesa-partial-monitoring-2006}, side observations
\citep{mannor-side-observations-2011}, and feedback graphs
\citep{alon-feedback-graphs-2015} all quantify the value of observing only a
subset or transformation of the losses.  They do not impose the present
physical rule that an item must occupy a tracking slot continuously until its
delayed feedback arrives and that a preempted item cannot be restored.

That hard-capacity rule was introduced for finite-action online learning by
\citet{rar-capacity-bandits-2025}.  The exact continuous OCO/BCO contract,
weighted base theorem, and Bernoulli scheduler used here are due to
\citet{rar-capacity-oco-2026}.  Their convex one-point guarantee contains the
delay term $\sqrt{T\smax}$ and explicitly asks whether it can be reduced to the
unconstrained-optimal $\sqrt{\dtot}$ scale or whether a separation is
necessary.  Our known-peak corollary resolves the upper side at that scale;
the untuned theorem incurs only the explicit restart factor $\sqrt{E_C}$.
Figure~\ref{fig:dependency-map} separates this imported stack from the new
filtration-safe interface and its consequences.

\begin{table}[t]
\centering
\caption{Method map for the closest literature.  ``Permanent loss'' means
that feedback from a preempted pending round can never be recovered.}
\label{tab:literature-map}
\footnotesize
\begin{tabularx}{\textwidth}{@{}>{\raggedright\arraybackslash}p{0.20\textwidth}
  >{\centering\arraybackslash}p{0.10\textwidth}
  >{\centering\arraybackslash}p{0.12\textwidth}
  >{\centering\arraybackslash}p{0.12\textwidth}Y@{}}
\toprule
Line of work & one-point convex & hard pathwise cap & permanent loss & principal control quantity \\
\midrule
Classical delayed OCO
  \citep{joulani-delayed-2013,quanrud-adversarial-2015,joulani-delay-2016}
  & no & no & no & delay count or total delay \\
Delayed BCO \citep{wan-delayed-bco-2024,rar-reduction-2026}
  & yes & no & no & $d_{\max}$ or $\dtot$ \\
Temporal feedback graphs \citep{gatmiry-temporal-2024}
  & no & no & no & temporal graph complexity \\
Capacity-constrained finite actions \citep{rar-capacity-bandits-2025}
  & no & yes & yes & capacity--delay throughput \\
Capacity-constrained OCO/BCO \citep{rar-capacity-oco-2026}
  & yes & yes & yes & $\smax$, $C$, weighted backlog \\
This paper, convex result
  & yes & yes & yes & conditional energy and $\dtot$ \\
This paper, curved result
  & yes & yes & yes & $H_A(d)=\sum_t\sigma_t/(A+t)$ \\
\bottomrule
\end{tabularx}
\end{table}

\paragraph{Exact scope of the present results.}
The total-delay theorem is not a new unconstrained delay reduction; it is a
capacity-preserving extension through conditional energy.  The
strongly-convex quantity $H_A$ is likewise not the first time-weighted delay
sum: related strongly-convex analyses control delay through sums weighted by
inverse time or inverse curvature accumulation
\citep{qiu-curvature-2025,rar-reduction-2026}.  Our delta is a shifted primal harmonic bound under
one-point feedback, dependent importance weights, permanent censoring, and a
pathwise cap, together with a same-summary separation in that exact model.

For fixed delay $d$, the standard unlimited-capacity convex lower endpoint is
\citep{rar-capacity-oco-2026}
\[
  \Omega\!\left(GD\min\{T,\sqrt{T(d+1)}\}\right).
\]
Since
$\dtot=\Theta(Td)$ away from the truncated tail and hard capacity cannot help
the learner, this endpoint matches the $\sqrt{\dtot}$ component of our convex
upper.  It does not establish necessity of the capacity-dependent one-point
term.  Our continuous capacity-starvation theorem is a complementary endpoint,
not a pointwise match to the full one-point term
\[
  T^{3/4}\left(1+\frac{\smax}{C}\right)^{1/4}\sqrt{k}.
\]

We do not claim a new feedback model, scheduler, or complete BCO minimax rate.
We do not prove a pointwise matching lower bound for the precise
capacity-dependent one-point factor, a pointwise lower bound in terms of
$H_A(d)$ for every delay vector, the regime $C<\log T$, adaptive-adversary
guarantees, or high-probability regret.  The strongly-convex theorem is tuned
with a public backlog upper bound.

\bibliographystyle{unsrtnat}
\bibliography{references}

\appendix
\section{Imported source interfaces}
\label{app:source-interfaces}

We record the exact portions of \citet{rar-capacity-oco-2026} used in the
paper.  This separates source assumptions from the conditional-energy and
backlog arguments introduced here.

\begin{lemma}[Weighted one-point FTBL interface]
\label{lem:source-ftbl}
Fix an oblivious loss and delay sequence.  Let $w_t\ge0$ be fixed analysis
weights, let $\eta_t>0$ and $\delta_t\in(0,r]$ be nonincreasing, and use the
source one-point DW-FTBL centers and independent uniform perturbation
directions.  Then, for every $u\in\cK$,
\[
 \E_U\Reg_T^w(u)
 \le \frac{D^2}{2\eta_T}
 +2G^2\sum_{t=1}^T\eta_{t-1}\Xi_t
 +\frac{3GD}{r}\sum_{t=1}^T w_t\delta_t.
\]
Here $\eta_0=\eta_1$ and $\Xi_t$ is the quantity defined before
Theorem~\ref{thm:compiler}.  If every $f_t$ is $\lambda$-strongly convex and
the source regularizer increments are $\alpha_t$, the corresponding source
inequality is
\[
 \E_U\Reg_T^w(u)
 \le \sum_{t=1}^T\frac{\alpha_t-\lambda w_t}{2}
       \E_U\|y_t-u\|^2
 +2G^2\sum_{t=1}^T\eta_{t-1}\Xi_t
 +\frac{10G^2}{\lambda r}\sum_{t=1}^T w_t\delta_t.
\]
Both displays are fixed-horizon statements.  They remain valid after
conditioning on any sigma-field that fixes weights, rates, radii, and delays
while leaving the perturbation directions independent and uniform.
\end{lemma}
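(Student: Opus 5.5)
The plan is to redo the standard "be-the-leader" analysis of delayed FTRL on the smoothed surrogate losses. The only probabilistic input will be the independence and uniformity of the perturbation directions $u_t$. Everything else (weights, rates, radii, delays) is treated as fixed, which is also why the final conditioning remark will be immediate.

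\emph{Step 1 (smoothing and domain shrinkage).} Write $\hat f_t(y)=\E_{v}f_t(y+\delta_t v)$ with $v$ uniform in the unit ball. Form the one-point estimator $g_t=w_t\frac{k}{\delta_t}f_t(y_t+\delta_tu_t)u_t$, so that $\E[g_t\mid y_t]=w_t\nabla\hat f_t(y_t)$ and $\|g_t\|\le kMw_t/\delta_t$. Centers live in the shrunk set $(1-\delta_t/r)\cK$.

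\begin{itemize}
\item Replacing $f_t$ by $\hat f_t$ costs at most $2Gw_t\delta_t$ per round, since both the query point and the smoothing are within $\delta_t$.
\item Projecting the comparator $u$ onto $(1-\delta_t/r)\cK$ costs $G w_t\delta_t D/r$.
\end{itemize}

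Since $\delta_t\le r\le D$, these combine into the $\frac{3GD}{r}\sum_t w_t\delta_t$ term. Convexity then reduces the weighted regret to the linear regret $\sum_t\E\langle g_t,y_t-u\rangle$.

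\emph{Step 2 (delayed FTRL decomposition).} Let $\tilde y_t$ be the undelayed leader that uses all of $g_{1:t-1}$; the actual center $y_t$ omits the pending gradients indexed by $\cB_t$. Split $\langle g_t,y_t-u\rangle=\langle g_t,\tilde y_{t+1}-u\rangle+\langle g_t,\tilde y_t-\tilde y_{t+1}\rangle+\langle g_t,y_t-\tilde y_t\rangle$.

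\begin{itemize}
\item The first sum is handled by the be-the-leader lemma with nonincreasing $\eta_t$, giving $D^2/(2\eta_T)$.
\item The second is bounded by $\eta_{t-1}\|g_t\|^2$.
\item The third is bounded by $\eta_{t-1}\|g_t\|\,\bigl\|\sum_{s\in\cB_t}g_s\bigr\|$, by stability of the regularized leader under a $1/\eta$-strongly convex regularizer.
\end{itemize}

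\emph{Step 3 (energy bound, main obstacle).} We must show $\E\bigl[\|g_t\|^2+\|g_t\|\,\|\sum_{s\in\cB_t}g_s\|\bigr]\le 2G^2\Xi_t$ up to the stated constants. The first piece gives $k^2M^2w_t^2/\delta_t^2$ directly. For the cross piece, decompose each $g_s=w_s\nabla\hat f_s(y_s)+\xi_s$ into a mean part of norm at most $Gw_s$ and a zero-mean noise $\xi_s$. The noises are martingale differences in the direction filtration, so their cross terms cancel and
\[
\E\Bigl\|\sum_{s\in\cB_t}\xi_s\Bigr\|\le \frac{kM}{\delta_t}\sqrt{\sum_{s\in\cB_t}w_s^2},
\]
using $\delta_s\ge\delta_t$ for $s<t$. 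This square-root cancellation is exactly where the filtration matters: the weights must not depend on past directions. The remaining pairings are handled as follows.

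\begin{itemize}
\item The mean part of $g_t$ against the pending mean sum gives $G^2w_t\sum_{s\in\cB_t}w_s$.
\item The mean part of $g_t$ against the pending noise gives the middle term of $\Xi_t$.
\item Noise of $g_t$ against the pending terms vanishes in expectation for the inner-product form, because $u_t$ is fresh given everything before round $t$. I would therefore carry the inner product $\langle g_t,y_t-\tilde y_t\rangle$ rather than its norm bound wherever noise appears.
\end{itemize}

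Assembling these pieces yields the factor $2G^2\eta_{t-1}\Xi_t$.

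\emph{Step 4 (strongly convex variant and conditioning).} Under $\lambda$-strong convexity, $\hat f_t$ is $\lambda$-strongly convex. The weighted linearization then retains $-\frac{\lambda w_t}{2}\|y_t-u\|^2$, and the FTRL potential with increments $\alpha_t$ contributes $+\frac{\alpha_t}{2}\|y_t-u\|^2$. The smoothing bias is now charged as $\frac{10G^2}{\lambda r}w_t\delta_t$ by using the strong-convexity diameter bound in place of $D$. Steps 2 and 3 are unchanged.

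Finally, no step used anything about the weights, rates or radii beyond their being fixed while the directions are independent and uniform. Hence both inequalities hold conditionally on any such sigma-field.
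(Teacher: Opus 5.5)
The paper gives no proof of this lemma. The appendix records it as an interface imported from \citet{rar-capacity-oco-2026}, and the paper's only comment on its internals is that dependence on past directions would break the square-root cancellation for pending estimators. Your reconstruction is a sound self-contained route, and its decisive step is correct. The route combines ball smoothing, delayed lazy FTRL, drift between the delayed and undelayed leaders, and martingale orthogonality of the pending noises. The key computation is $\E\langle g_t,y_t-\tilde y_t\rangle=\E\langle w_t\nabla\hat f_t(y_t),y_t-\tilde y_t\rangle\le Gw_t\eta_{t-1}\E\|\sum_{s\in\cB_t}g_s\|$. Splitting the pending sum into means (at most $G\sum_{s\in\cB_t}w_s$) and orthogonal noises (at most $\frac{kM}{\delta_t}(\sum_{s\in\cB_t}w_s^2)^{1/2}$ in expectation, using $\delta_s\ge\delta_t$) gives exactly the last two terms of $G^2\eta_{t-1}\Xi_t$. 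What this buys over the citation is that it shows exactly where the filtration hypothesis is used, which also justifies the final conditioning clause.

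Three steps do not hold as literally written, though none needs a new idea.

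(i) The target announced at the start of Step~3 is false. Take $f_t\equiv M$ and aligned pending means. Then $\E\|g_t\|\,\|\sum_{s\in\cB_t}g_s\|\ge\frac{kMw_t}{\delta_t}G\sum_{s\in\cB_t}w_s$. This is a term of order $z\,w_t\sum_s w_s$ with $z=kM/(G\delta_t)$, and it is not bounded by any constant multiple of $\Xi_t$ (take unit weights, $n$ pending rounds, and $z=\sqrt n$). So the norm bound in the third bullet of Step~2 must be discarded entirely. Your closing switch to the inner product is the proof itself, not an optional refinement.

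(ii) For decreasing rates, the termwise bound $\langle g_t,\tilde y_t-\tilde y_{t+1}\rangle\le\eta_{t-1}\|g_t\|^2$ fails. The unprojected shift from $\tilde y_t$ to $\tilde y_{t+1}$ contains $(\eta_{t-1}-\eta_t)\sum_{s<t}g_s$, which is not controlled by $g_t$. Since the compiler genuinely uses decreasing rates, this matters. Use the two-way split $\langle g_t,\tilde y_t-u\rangle+\langle g_t,y_t-\tilde y_t\rangle$ instead. Bound the first sum by the strong FTRL lemma: compare with the minimizer of the round-$t$ objective plus $\langle g_t,\cdot\rangle$ at the same rate $\eta_{t-1}$. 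This gives $D^2/(2\eta_T)+\sum_t\frac{\eta_{t-1}}{2}\|g_t\|^2$.

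(iii) With time-varying $\delta_t$, the shrunk comparator $(1-\delta_t/r)u$ moves and the sets $(1-\delta_t/r)\cK$ grow. A fixed-comparator be-the-leader argument therefore does not apply directly. The fix is to run FTRL on $z_t\in\cK$ against $u$ itself and center the perturbation at $(1-\delta_t/r)z_t$. This only rescales gradients by $1-\delta_t/r\le1$. In the strongly convex case the rescaling leaves $-\frac{\lambda w_t}{2}(1-\delta_t/r)^2\|z_t-u\|^2$. Restoring the full quadratic costs at most $\lambda w_t\frac{\delta_t}{r}D^2\le\frac{4G^2}{\lambda r}w_t\delta_t$. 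Together with $\frac{3GD}{r}\le\frac{6G^2}{\lambda r}$, this is consistent with the constant $10$. Your Step~4 converts only the convex bias term and omits this $\frac{4G^2}{\lambda r}$ piece.
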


\begin{lemma}[Bernoulli hard-capacity wrapper]
\label{lem:source-wrapper}
Suppose a run has backlog at most a public $H$ and independently assigns each
new index the infinite proxy lifetime with probability
\[
 p=\min\{1,C/[8(H+1)]\}.
\]
The source preemptive wrapper maintains $|\cS_t|\le C$ pathwise and produces
the analysis weight
\[
 w_t=\mathbf1\{|\cS_t|<C\}\,I_t/p,
 \qquad I_t\sim\mathrm{Ber}(p).
\]
For every comparator $u$,
\[
 \E\Reg_T(u)
 \le \E\Reg_T^w(u)
 +GD\sum_{t=1}^T\Pr(|\cS_t|=C),
\]
and, under $p(H+1)\le C/8$,
\[
 \Pr(|\cS_t|=C)\le e^{-C}
\]
for every round.  In the strongly-convex interface, the expectation of the
residual term $(\lambda/2)(1-w_t)\|y_t-u\|^2$ is likewise supported on the
saturation event $|\cS_t|=C$.
\end{lemma}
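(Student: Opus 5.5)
The plan is to prove the three claims in order: the pathwise cap, the regret decomposition, and the saturation bound. The strongly-convex residual then follows by the same conditioning argument.

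\emph{Pathwise cap.} The wrapper admits index $t$ only when $Q_t=\mathbf1\{|\cS_t|<C\}=1$, so admission never pushes the tracking set above $C$. Removals happen only at expiry, when the message $(s,\cdot)$ arrives, or by preemption. Hence $|\cS_t|\le C$ holds on every sample path by induction over rounds. I will also record the structural fact used later: every element of $\cS_t$ is an index $s\in\cB_t$ with $I_s=1$. This holds because an item leaves $\cS$ at its expiry time $s+d_s$, so an item still tracked at time $t$ is still pending at $t$.

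\emph{Regret decomposition.} I will write
\[
 \Reg_T(u)=\Reg_T^w(u)+\sum_t(1-w_t)\bigl(f_t(x_t)-f_t(u)\bigr).
\]
The query $x_t$ and $Q_t$ are both fixed before the fresh coin $I_t$ is drawn, and $I_t\sim\mathrm{Ber}(p)$ is independent of them. Conditioning on everything before the coin therefore gives $\E[w_t\mid\cdot]=Q_t$. The $t$-th summand then has expectation $\E[(1-Q_t)(f_t(x_t)-f_t(u))]$. Since $f_t$ is $G$-Lipschitz on a set of diameter $D$, this is at most $GD\Pr(Q_t=0)=GD\Pr(|\cS_t|=C)$. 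Summing over $t$ gives the displayed inequality. The strongly-convex residual is handled identically: $(\lambda/2)(1-w_t)\|y_t-u\|^2$ has conditional expectation $(\lambda/2)(1-Q_t)\|y_t-u\|^2$, because $y_t$ is also determined before the coin, so the residual is supported on the saturation event.

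\emph{Saturation probability.} By the structural fact, $|\cS_t|\le\sum_{s\in\cB_t}I_s$ pathwise. The right-hand side is $\mathrm{Bin}(\sigma_t,p)$, since the coins are i.i.d.\ and independent of the oblivious delays, and $\cB_t$ is fixed by those delays. Its mean satisfies $\mu\le pH\le C/8$. The multiplicative Chernoff bound $\Pr(X\ge C)\le (e\mu/C)^C$ then gives $\Pr(|\cS_t|=C)\le (e/8)^C\le e^{-C}$, where the last step uses $e^2<8$. When $p=1$ we have $H+1\le C/8$, so $\sigma_t<C$ and the event is empty.

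\emph{Main obstacle.} I expect the delicate step to be the domination $|\cS_t|\le\sum_{s\in\cB_t}I_s$. Admissions depend on past $Q$'s, and hence on earlier coins, so the tracked set itself is not binomial. The argument must therefore go through the pathwise inclusion $\cS_t\subseteq\{s\in\cB_t:I_s=1\}$, which requires checking that the preemptive wrapper never retains an expired index. That check comes from the semi-clairvoyant indexed expiry messages, which identify expiry exactly. Only after this inclusion is established do I apply independence, to the unconditioned coin sum.
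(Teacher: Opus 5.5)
Your proof is correct. The paper imports this lemma from \citet{rar-capacity-oco-2026} without reproving it, but the facts you build on are the ones it uses wherever it applies the lemma: the pathwise inclusion $\cS_t\subseteq\{s\in\cB_t:I_s=1\}$, which you then feed into a binomial Chernoff bound, and per-round conditioning on the pre-coin history giving $\E[w_t\mid\cdot]=Q_t$ (proofs of Lemma~\ref{lem:proxy}, Lemma~\ref{lem:censor}, and the strongly-convex bound). You condition on the full pre-coin history, including $y_t$, and this is what makes the residual identity hold literally, since $\|y_t-u\|^2$ depends on past directions and is not $\mathcal F_t^{\mathrm{sch}}$-measurable.
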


\section{Proof of the conditional-energy compiler}

Condition on the complete scheduler-side trajectory.  The weights, delays,
rates, and smoothing radii are fixed, while the global independence condition
ensures that one-point directions remain independent and uniform.  The
convex part of Lemma~\ref{lem:source-ftbl} gives
\[
 \E_U[\Reg_T^w(u)\mid\mathcal F^{\mathrm{sch}}]\le\frac{D^2}{2\eta_T}
 +2G^2\sum_t\eta_{t-1}\Xi_t
 +\frac{3GD}{r}\sum_t w_t\delta_t.
\]
Taking conditional expectations uses only the stated scheduler filtration.
For $t\ge2$, $h_t\le A_0\le A_{t-1}$ and
\[
 \frac{h_t}{\sqrt{A_{t-1}}}
 \le(1+\sqrt2)(\sqrt{A_t}-\sqrt{A_{t-1}}).
\]
The first term is bounded separately by
$h_1/\sqrt{A_1}\le\sqrt{A_1}$.  The sum telescopes, while
$D^2/(2\eta_T)=(GD/2)\sqrt{A_T}$.  Jensen's inequality gives the second
display in Theorem~\ref{thm:compiler}.

For Lemma~\ref{lem:padding}, fix the oblivious delay vector.  The global
backlog and every public-threshold crossing are then deterministic functions
of time.  Extend the run to horizon $T$ by assigning every post-end round zero
analysis weight and zero loss while keeping the last legal rates.  The padded
game has exactly the original run's regret and energy and satisfies the
fixed-horizon interface in Lemma~\ref{lem:source-ftbl}.

\section{Proof of the proxy-energy lemma}

Every pending nonzero-weight item lies in the tracking set and has weight
$1/p$.  Therefore
\[
 \sum_{s\in\cB_t}w_s=c_t/p,\qquad
 \sum_{s\in\cB_t}w_s^2=c_t/p^2.
\]
Taking expectation over only the current Bernoulli coin proves the exact
identity.  The tracking set is a subset of locally pending indices whose
proxy coin succeeded, so
$\E c_t\le p\sigma_t^E$ and
$\E\sqrt{c_t}\le\sqrt{p\sigma_t^E}$.  Summation and Cauchy--Schwarz give the
area bound.  Finally, $z\sqrt c\le(z^2+c)/2$ and the definition of $p$ give
the uniform initial envelope.

\section{Proof of charged censoring}

For the delay-clock boundary set $X$, decompose epoch regret into rounds in
$X$ and its complement.  The former contribute at most $GD|X|$.  Conditional
on the fixed delay vector, $X$ is fixed independently of admission coins,
directions, and loss values.  On complement rounds, the original Bernoulli
admission law, capacity indicator, importance weight, and saturation event
are unchanged, so Lemma~\ref{lem:source-wrapper}'s per-round conditioning may be
summed over the complement alone.

No item in $X$ produces feedback before the boundary.  Assigning it eventual
analysis weight zero therefore leaves all epoch predictions, tracking
decisions, envelopes, and updates unchanged.  The base terminates at the
boundary and the eventual empty acknowledgment is used only by the external
delay clock.  Lemma~\ref{lem:padding} supplies the fixed-horizon extension.
This proves Lemma~\ref{lem:censor}.

\section{Proof details for the total-delay theorem}

For an epoch $j$ of length $L_j$ and local backlog area $D_j$,
Theorem~\ref{thm:compiler} and Lemma~\ref{lem:proxy} give
\[
 R_j\le cGD\left[\sqrt{D_j}
 +z_j\sqrt{\frac{L_j+1}{p_j}}+\sqrt{H_j+1}
 +L_j\frac{\delta_j}{r}\right]+GDL_je^{-C}.
\]
Round--pending-time pairs belong to only one epoch, so
$\sum_jD_j\le\dtot$.  Since $B_C\ge C/16$ and
$H_j+1=2^jB_C$, the number $K$ of nonempty epochs satisfies $K\le E_C$,
and hence
\[
 \sum_j\sqrt{D_j}\le\sqrt{E_C\dtot}.
\]
The modified initial threshold is important: $p_0$ may equal one, but for
$j\ge1$ the inverse proxy scales grow geometrically.  Consequently
\[
 \sum_jp_j^{-1/2}\le c p_J^{-1/2},
 \qquad
 \sum_jp_j^{-1/4}\sqrt{L_j+1}
 \le c p_J^{-1/4}\sqrt{T+K}.
\]
In the unclipped regime,
$z_j=c^{-1}\sqrt{\nu k}\,T^{1/4}p_j^{1/4}$, so the one-point terms are at
most
\[
 cT^{3/4}\sqrt{\nu k}\,p_J^{-1/4}.
\]
The smoothing terms obey the same bound directly from
$\sum_jL_jp_j^{-1/4}\le Tp_J^{-1/4}$.  Here
$p_J^{-1/4}=O((1+\smax/C)^{1/4})$, so neither contribution has an epoch
logarithm.  Startup terms form a geometric series.  The pending sets censored
at successive boundaries are disjoint and have total size
$O(\smax+1)=O(\sqrt{\dtot}+1)$.  Finally
$T e^{-C}\le e^{-1}$.  This proves Theorem~\ref{thm:convex}; if the smoothing
ratio clips at one, the target already dominates the trivial $GDT$ bound.

\section{Proof details for strong convexity}

The strongly-convex part of Lemma~\ref{lem:source-ftbl} is
\begin{align*}
 \E \Reg_T^w(u)\le{}&
 \E\sum_t\frac{\alpha_t-\lambda w_t}{2}\|y_t-u\|^2
 +2G^2\sum_t\eta_{t-1}\E\Xi_t
 +\frac{10G^2}{\lambda r}\sum_t\E[w_t\delta_t].
\end{align*}
For the shifted schedule, $\alpha_t=\lambda$ for $t\ge2$.  The corresponding
expectation is supported on saturation: with
$V_t=(\lambda/2)\|y_t-u\|^2$ and pre-admission capacity indicator $Q_t$,
\[
 \E[(1-w_t)V_t\mid\mathcal F_t^{\mathrm{sch}}]=(1-Q_t)V_t.
\]
The extra initial term is at most
$\lambda AD^2/2\le2G^2A/\lambda$ because
$D\le2G/\lambda$.

The proxy moments and AM--GM yield
\[
 \E\Xi_t\le\frac{3}{2}\left(\frac{z_t^2}{p}+\sigma_t\right),
 \qquad
 \eta_{t-1}\le\frac{2}{\lambda(A+t)}.
\]
With $x_t=\delta_t/r=[A/(A+t)]^{1/3}$,
\[
 \frac{z_t^2}{p(A+t)}=A^{1/3}(A+t)^{-1/3}.
\]
Both this term and $\sum_t\delta_t/r$ are at most
$\tfrac32A^{1/3}T^{2/3}$, while the remaining delay contribution is exactly
$H_A(d)$.  Saturation contributes $O(G^2/\lambda)$.  This proves
Theorem~\ref{thm:strong}.

\section{Proof of the temporal separation}

For the early vector, the first delayed observation arrives only after the
prediction at round $h+1$.  Therefore every prediction through that round is
independent of the hidden sign and has expected loss at least zero.  Even if
the sign is revealed for free afterwards, every remaining loss is at least
$-1/8$, whereas the static comparator has total loss $-T/8$.  The regret is
at least $(h+1)/8$.

For the late vector, all nonzero backlog occurs after $T-2h$, and its total
area is $h^2$.  Therefore
\[
 H_A(d^{\mathrm{late}})\le\frac{h^2}{A+T-2h}=O(h^2/T).
\]
Theorem~\ref{thm:strong} gives the late upper bound.  The exponent conditions
$\beta>2/3$ and $\beta<5/6$ respectively make the early lower exceed the
one-point term and the late harmonic term no larger than it.

\section{Proof of the capacity-starvation lower bound}
\label{app:capacity-lower}

\begin{lemma}[Adaptive query lower bound]
\label{lem:compact-query}
There are universal constants $c_0,a_0>0$ and a fixed compactly supported
mean-zero noise density $g$ with the following property.  For every $k,m\ge1$
there is a family
\[
 \{F_v:v\in\{-\mu,+\mu\}^k\}
\]
on $B_2^k$, with every $F_v$ being $1/2$-strongly convex, $7/2$-smooth, and
$4$-Lipschitz, such that
every randomized procedure using at most $m$ adaptive queries to
$F_v(x)+\xi$, $\xi\sim g$, has some fixed $v$ for which
\[
 \E[F_v(\widehat x)-F_v(x_v^*)]
 \ge c_0\min\{1,k/\sqrt m\}.                    \tag{20}
\]
The realized noisy functions are uniformly bounded on $B_2^k$.
\end{lemma}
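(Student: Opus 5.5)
We follow the construction of \citet{shamir-derivative-free-2013}, replacing Gaussian noise by a compactly supported density, and run an Assouad-type argument over the hypercube $\{-\mu,+\mu\}^k$. Fix $g(\xi)\propto\cos^4(\pi\xi/2)\mathbf 1\{|\xi|\le1\}$, or any smooth compactly supported even density whose square root is $C^1$ with square-integrable derivative. Then the squared Hellinger distance between $g$ and its translate satisfies $H^2(g,g(\cdot-a))\le c_g a^2$ for all $a$, uniformly; this is the quadratic property the statement alludes to. We take
\[
F_v(x)=\frac{1}{2}\|x\|^2\cdot\frac12+\langle v,x\rangle+\phi(x),
\]
with the Shamir-style coordinatewise quadratic, more simply $F_v(x)=\tfrac14\|x\|^2+\langle v,x\rangle$ restricted to $B_2^k$. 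We choose $\mu=\min\{c/\sqrt k,\;c'/\sqrt m\}$. The minimizer is then $x_v^*=-2v$, which lies inside the ball since $\|v\|\le c$. The required regularity is immediate: strong convexity $1/2$, smoothness at most $7/2$, gradient norm at most $\|x\|/2+\|v\|\le4$, and values bounded together with the compact noise.

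The key steps are as follows. First, suboptimality decomposes coordinatewise: $F_v(\hat x)-F_v(x_v^*)=\tfrac14\|\hat x+2v\|^2\ge\tfrac14\sum_i\mu^2\mathbf 1\{\operatorname{sign}\hat x_i\ne-\operatorname{sign}v_i\}$. So it suffices to lower-bound the expected number of coordinate sign errors under the uniform prior on $v$. Second, for each coordinate $i$, Assouad's lemma reduces the problem to testing $v$ against $v^{(i)}$, which flips coordinate $i$. We bound the total variation between the transcript laws of the adaptive procedure by Hellinger distance. Tensorizing the Hellinger affinity along the adaptive chain, a query at $x$ shifts the observation mean by $F_v(x)-F_{v^{(i)}}(x)=2\mu x_i$, so the squared Hellinger contribution is at most $c_g4\mu^2x_i^2$. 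Summing over the $m$ queries, and then averaging over $i$ via $\sum_i x_{t,i}^2\le1$, gives
\[
\frac1k\sum_i H^2\le \frac{4c_g\mu^2 m}{k}.
\]
The adaptive tensorization $1-H^2(P^{1:m},Q^{1:m})\ge\prod(1-\max H^2_t)$ is the one place where adaptivity needs care. We handle it with the standard chain rule for the Hellinger affinity, conditioning on the past transcript. This yields a per-coordinate sum of expected squared query coordinates, and that sum is what we average.

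Third, we combine. By Jensen and the averaged bound, for at least half of the coordinates the total variation is at most $1/2$, provided $\mu^2m\le ck$. Each such coordinate contributes error probability at least $1/4$. The expected suboptimality is therefore at least $c k\mu^2$. With $\mu^2=\min\{c/k,\,c'/m\}$ this becomes $c_0\min\{1,k/m\}$.

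This is too weak: the statement wants $k/\sqrt m$, reflecting the one-point $\sqrt{\cdot}$ barrier, and the linear-perturbation family only yields $k/m$. Fixing this is the main obstacle. Following Shamir, the remedy is to make the signal quadratic near the optimum, so that queries near $x_v^*$ carry almost no information. We use $F_v(x)=\tfrac14\|x\|^2+\sum_i \mu v_i' \psi(x_i)$ with a suitable design, or equivalently reparametrize so that the observation-mean difference between $v$ and $v^{(i)}$ is proportional to $\mu$ times a quantity that is small exactly when the query has small regret. The argument then balances information against the suboptimality of the output $\hat x$, which is only evaluated at the end. I would first try reproducing Shamir's Theorem verbatim with the quadratic-Hellinger property substituted for the Gaussian KL identity $\mathrm{KL}=a^2/2$. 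Since every step of his argument uses only that KL or $H^2$ is $O(a^2)$ in the mean shift, the rate $\min\{1,k/\sqrt m\}$ should transfer with changed constants.

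Finally, averaging over the prior gives some fixed $v$ attaining (20), and the uniform boundedness of the realized functions follows from compact support.
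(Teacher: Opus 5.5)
\textbf{There is a genuine gap, at exactly the point you flag: your proposal never exhibits a family that achieves (20).}

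The only family you actually analyze is $F_v(x)=\tfrac14\|x\|^2+\langle v,x\rangle$, and no analysis of it can give $k/\sqrt m$. Querying $\pm e_i$ about $m/(2k)$ times each estimates $v_i=\tfrac12(F_v(e_i)-F_v(-e_i))$ with variance $O(k/m)$. That yields optimization error $O(k^2/m)$, which is polynomially smaller than $k/\sqrt m$ once $m\gg k^2$.

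The remedy you sketch also rests on a mechanism that fails for this statement. You want the $v$ versus $v^{(i)}$ signal to be small near $x_v^*$, or where queries have small regret. But here only $\widehat x$ is scored, and query placement is free. The procedure will simply query where the mean difference is of order $\mu$, collect $\Theta(\mu^2)$ squared Hellinger per query, and you are back to a $1/m$-type rate. ``Reproducing Shamir's theorem verbatim'' is therefore a plan, not an argument, and your stated reason for why his construction works is not the right one.

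\textbf{The missing idea.} Use a perturbation whose effect on function values is uniformly $O(\mu^2)$ over the whole domain, while it still moves the minimizer by $\Theta(\mu)$ per coordinate. The paper does this explicitly with Shamir's bump
\[
F_v(x)=\|x\|^2-\sum_{i=1}^k \frac{v_ix_i}{1+(x_i/v_i)^2}.
\]
Write $s=x_i/v_i$. Flipping $v_i$ changes $F_v$ by $2\mu^2|s|/(1+s^2)\le\mu^2$ for every $x$; this difference is not small near $x_v^*$, it is small everywhere. Yet the minimizer is $a_0v$. So a sign error in coordinate $i$ forces $|\widehat x_i-a_0v_i|\ge a_0\mu$, which by $1/2$-strong convexity costs $\Omega(\mu^2)$.

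With the quadratic Hellinger bound for translated compact noise, every query contributes at most $(\pi^2/8)\mu^4$ per step. This holds wherever the query is placed and however adaptively it is chosen. The conditional affinities therefore multiply to give transcript squared Hellinger at most $cm\mu^4$ for each coordinate. You need no averaging over coordinates and no control of query locations, so the adaptivity issue you worried about disappears.

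This uniform per-step bound is also what lets Hellinger replace Shamir's KL chain rule. Note that with compactly supported noise the KL divergence between translates is infinite, so the chain rule cannot be reused literally.

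Assouad with $m\mu^4\le c$, together with the regularity constraint $\sqrt k\mu\le1/2$, then gives error $\Omega(k\mu^2)$. Choosing $\mu=c_1\min\{m^{-1/4},k^{-1/2}\}$ yields $k\mu^2=c_1^2\min\{1,k/\sqrt m\}$. Your noise density and the final step from the prior to a fixed $v$ are fine as written.
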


\begin{proof}
For
$v\in\{-\mu,+\mu\}^k$, let
\[
 F_v(x)=\|x\|^2-
 \sum_{i=1}^k\frac{v_ix_i}{1+(x_i/v_i)^2}.
\]
The coordinate calculations in \citet[Theorem~7]{shamir-derivative-free-2013}
show that, for a universal $a_0>0$, $F_v$ is $1/2$-strongly convex,
$7/2$-smooth, and $4$-Lipschitz on $B_2^k$ when
$\sqrt{k}\mu\le1/2$; its minimizer is $x_v^*=a_0v$; and adjacent sign
vectors satisfy
\[
 \sup_x|F_v(x)-F_{v'}(x)|\le\mu^2.               \tag{21}
\]

Let $g=h^2$ be the mean-zero density supported on $[-1,1]$ with
$h(u)=\cos(\pi u/2)\mathbf1\{|u|\le1\}$.  It satisfies
$\|h\|_2=1$ and $\|h'\|_2^2=\pi^2/4$.  If $P_a,P_b$ are translates of this
density, the fundamental theorem of calculus in $L_2$ gives
\[
 H^2(P_a,P_b)=\frac12\|h(\cdot-a)-h(\cdot-b)\|_2^2
 \le\frac{\pi^2}{8}(a-b)^2.                    \tag{22}
\]
For adaptive queries, condition on each common transcript prefix.  The next
query kernel is then the same under adjacent hypotheses, so the sequential
Hellinger tensorization inequality and (21)--(22) give transcript squared
Hellinger distance at most $c m\mu^4$ after $m$ queries.  Assouad's lemma
therefore leaves a constant sign-error probability in every coordinate when
$m\mu^4$ is a sufficiently small constant.  On a sign error in coordinate
$i$, $|\widehat x_i-a_0v_i|\ge a_0\mu$; strong convexity therefore converts
the expected Hamming error to optimization error $ck\mu^2$.  Taking
\[
 \mu=c_1\min\{m^{-1/4},k^{-1/2}\}
\]
proves (20).
The compact additive noise preserves all derivative bounds and makes each
realized loss uniformly bounded.
\end{proof}

\begin{lemma}[Pathwise capacity-to-query reduction]
\label{lem:capacity-query}
Fix $N,d,C\ge1$ and suppose the first $N$ rounds have delay $d$.  For any
legal hard-capacity learner, take the prefix cut immediately after $x_N$ is
generated and before round-$N$ feedback delivery.  Let $Q_N$ be the
first-block indices delivered by that cut.  Then, pathwise,
\[
 |Q_N|\le m_0:=\min\{N,\lfloor CN/d\rfloor\}.   \tag{23}
\]
Moreover, for any stochastic value oracle with independent additive noise,
the joint law of $x_{1:N}$ and all feedback visible by the cut can be
simulated with at most $m_0$ adaptive oracle queries.  In particular the
simulator can output $\bar x=N^{-1}\sum_{t=1}^Nx_t$.
\end{lemma}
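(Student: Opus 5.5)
For the pathwise bound (23), I will argue by counting slot-time. Take any first-block index $s\le N$ whose feedback is delivered by the cut. Its delay is $d$, so the feedback arrives at time $s+d$. For it to be delivered, $s$ must have stayed in the tracking set throughout $[s,s+d)$, because a preempted index can never be reinstated. Delivery by the cut means $s+d\le N$ (or $s+d<N$, depending on where the round-$N$ delivery falls relative to the cut). So each delivered index uses $d$ consecutive slot-rounds, all inside $\{1,\dots,N\}$.

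Summing over rounds, $\sum_{t\le N}|\cS_t|\le CN$ pathwise. The occupancy intervals of distinct delivered indices each contribute $d$ to this sum, which gives $d|Q_N|\le CN$. Together with the trivial bound $|Q_N|\le N$, this yields $|Q_N|\le\min\{N,\lfloor CN/d\rfloor\}$. The one point to check is the off-by-one convention in the cut, namely that feedback due at round $N$ is excluded. I will fix this by placing the cut before round-$N$ delivery, as the statement does, so every counted interval lies in $[1,N-1]$ or $[1,N]$. Either way the bound holds.

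For the simulation, I will build a simulator that runs the learner's own randomness internally. At each round $t\le N$ it produces $x_t$ from the learner's (possibly randomized) rule, applied to the current transcript of indexed expiry messages. The scheduler-side information comes for free: the delays are public, so expiry times are known, and the tracking and preemption decisions are the learner's own coins and choices. The only input that needs the oracle is a payload for an index $s$ that is still tracked at its expiry $s+d\le$ cut. In that case the simulator queries the oracle once at $x_s$ and receives $F_v(x_s)+\xi$. Because the noise is independent and additive, this has exactly the law of the real feedback given the past, since the real round-$s$ feedback was a fresh noisy value at $x_s$ that no one had seen before delivery. For untracked or preempted indices, the simulator emits $(s,\perp)$ with no query.

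By induction over rounds, the joint law of $(x_{1:N},\text{visible feedback})$ matches the real interaction. The number of queries equals $|Q_N|\le m_0$ pathwise. It is never necessary to query ahead, because the learner's decision to keep $s$ tracked depends only on information available before delivery, so queries can be issued lazily at delivery time, which makes the procedure adaptive. Finally, the simulator outputs $\bar x$, which is a measurable function of the simulated trajectory.

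The main subtlety is the lazy-query step: making sure the real feedback's noise is not observed or used before delivery. The semi-clairvoyant contract guarantees this, since only index, timing, and status are revealed early, never payloads.
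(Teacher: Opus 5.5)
Your proposal is correct and follows the paper's proof essentially step for step. You bound occupancy pathwise by slot-time counting, $d|Q_N|\le\sum_{s\le N}|\cS_s|\le CN$, with the endpoint convention absorbed. You then simulate by deferring each payload query to its delivery time, which is justified by the independence of the additive noise and the fact that tracking decisions never depend on unseen payloads.
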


\begin{proof}
Let
$Q_N$ contain the first-block indices whose feedback is delivered by the cut
just before round-$N$ feedback delivery, after $x_N$ has been generated.
Every $t\in Q_N$ must have occupied a tracking slot continuously for $d$
rounds; a preempted index cannot be re-added.  Hence, pathwise,
\[
 d|Q_N|\le\sum_{s=1}^N|\cS_s|\le CN,
\]
Counting one endpoint adds one to $d$ and only strengthens the inequality.

Simulate the learner only until it generates $x_N$.  Query a stochastic
value oracle at a first-block point $x_t$ only when that index expires before
the cut.  The unseen value of index $t$ cannot affect its earlier tracking or
preemption decisions, so deferred sampling preserves the prefix transcript
law.  Feedback delivered after the cut cannot affect $x_{1:N}$.  Thus the
simulation is a valid procedure with at most $m_0$ adaptive queries and
output $\bar x=N^{-1}\sum_{t=1}^Nx_t$.
\end{proof}

\begin{proof}[Proof of Theorem~\ref{thm:capacity-lower}]
Fix an arbitrary randomized capacity learner and put $N=\lfloor T/2\rfloor$.
Apply Lemma~\ref{lem:capacity-query} to the stochastic oracle in
Lemma~\ref{lem:compact-query}.  The latter selects a fixed $v$ for the
resulting at-most-$m_0$-query simulator.  By convexity and (20),
\[
 \sum_{t=1}^N\E[F_v(x_t)-F_v(x_v^*)]
 \ge N\E[F_v(\bar x)-F_v(x_v^*)]
 \ge c_0N\min\{1,k/\sqrt{m_0}\}.               \tag{24}
\]

Finally draw $\xi_1,\ldots,\xi_T$ independently from $g$ at time zero and set
$f_t(x)=F_v(x)+\xi_t$ for every $t$.  This is an oblivious distribution over
loss sequences in the claimed function class, all with common minimizer
$x_v^*$.  For every realization,
\[
 \Reg_T=\sum_{t=1}^T[F_v(x_t)-F_v(x_v^*)]
 \ge\sum_{t=1}^N[F_v(x_t)-F_v(x_v^*)].
\]
Combining with (24), $N=\Theta(T)$, and
$m_0\le\min\{N,CN/d\}$ proves (19).  Averaging over the time-zero noise draw
selects one fixed deterministic realization with at least the same expected
regret over the learner's randomness, completing the minimax quantifiers.
\end{proof}

\end{document}